\let\Ginclude@graphics\@org@Ginclude@graphics 
\title[RF Approximation for Control]{Random Features Approximation for Control-Affine Systems
}
\newcommand{\R}{\mathbb{R}}
\newcommand{\norm}[1]{\left\lVert #1\right\rVert}
\newcommand{\diag}{\mathrm{diag}}
\newcommand{\derp}[2]{\frac{\partial #1 }{\partial #2 }}
\newcommand{\distas}{\overset{\text{i.i.d.}}{\sim}}
\newcommand{\bv}[1]{{\bm{#1}}}
\newcommand{\vSigma}{\bvgrk{\Sigma}}
\newcommand{\li}{\left<}
\newcommand{\ri}{\right>}
\newcommand{\bvgrk}[1]{{\boldsymbol{#1}}}	
\newcommand{\vphi}{\bvgrk{\phi}}
\newcommand{\vpsi}{\bvgrk{\psi}}
\newcommand{\vvarphi}{\bvgrk{\varphi}}
\newcommand{\vomega}{\bvgrk{\omega}}
\newcommand{\vPsi}{\bvgrk{\Psi}}
\newcommand{\vXi}{\bvgrk{\Xi}}
\newcommand{\vOmega}{\bvgrk{\Omega}}
\newcommand{\vvartheta}{\bvgrk{\vartheta}}
\newcommand{\tr}[1]{#1^\top}
\newcommand{\vf}{\bv{f}}
\newcommand{\vg}{\bv{g}}
\newcommand{\vk}{\bv{k}}
\newcommand{\vq}{\bv{q}}
\newcommand{\vs}{\bv{s}}
\newcommand{\vu}{\bv{u}}
\newcommand{\vv}{\bv{v}}
\newcommand{\vw}{\bv{w}}
\newcommand{\vx}{\bv{x}}
\newcommand{\vy}{\bv{y}}
\newcommand{\vz}{\bv{z}}
\algnewcommand\INPUT{\item[\textbf{Input:}]}%
\algnewcommand\OUTPUT{\item[\textbf{Output:}]}%
\newcommand{\vA}{\bv{A}}
\newcommand{\vB}{\bv{B}}
\newcommand{\vC}{\bv{C}}
\newcommand{\vD}{\bv{D}}
\newcommand{\vE}{\bv{E}}
\newcommand{\vG}{\bv{G}}
\newcommand{\vI}{\bv{I}}
\newcommand{\vK}{\bv{K}}
\newcommand{\vM}{\bv{M}}
\newcommand{\vU}{\bv{U}}
\author{\Name{Kimia Kazemian} \Email{kk983@cornell.edu}\\\Name{Yahya Sattar} \Email{ysattar@cornell.edu}\\\Name{Sarah Dean} \Email{sdean@cornell.edu}\\
\addr Department of Computer Science, Cornell University, Ithaca, NY, USA.
 }
\begin{document}
\maketitle

\begin{abstract}
	Modern data-driven control applications call for flexible nonlinear models that are amenable to principled controller synthesis and realtime feedback.
	Many nonlinear dynamical systems of interest are \emph{control affine}.
	We propose two novel classes of nonlinear feature representations which capture control affine structure while allowing for arbitrary complexity in the state dependence.
	Our methods make use of random features (RF) approximations,
	inheriting the expressiveness of kernel methods 
	at a lower computational cost.
	We formalize the representational capabilities of our methods by showing their relationship to the Affine Dot Product (ADP) kernel proposed by~\citet{contro1} and a novel Affine Dense (AD) kernel that we introduce.
	We further illustrate the utility by presenting a case study of data-driven optimization-based control using control certificate functions (CCF).
	Simulation experiments on a double pendulum empirically demonstrate the advantages of our methods.
\end{abstract}
\begin{keywords}%
  Random Features, Control-Affine Systems, Control Certificate Functions.%
\end{keywords}

\section{Introduction} 
Modern control applications require modelling systems with complex and nonlinear dynamics.
Modern machine learning techniques offer a data-driven solution.
From deep learning to kernel methods, 
learning-based approaches fit models to data.
Highly expressive models can approximate arbitrary functions, and therefore model arbitrarily complex phenomena.
However, this comes at a cost---they can be computationally expensive to train
and difficult to use for the purpose of synthesizing a controller.
This poses a challenge in real-time feedback systems.

Linear regression is a straightforward approach for learning dynamical models from data, so long as a suitable nonlinear feature representation, i.e., set of basis functions, is known~\citep{mania2020active}.
However, selecting proper basis functions is often challenging and requires modelling detailed properties of the unknown dynamics. 
One solution is to choose a set of random basis functions to generate feature vectors of fixed dimension. 
This approach, called \emph{random features~(RF)}, can achieve high expressiveness as long as the dimension of the feature vectors is large enough~\citep{randombasis}. 
Random features have proven useful for dynamical systems forecasting~\citep{giannakis2023learning}, receding horizon control~\citep{lale2021model}, and policy learning~\citep{lale2022kcrl}.

We propose two novel classes of random feature representations suitable for principled data-driven control (Section~\ref{sec:RF_for_CAM}). 
Our key insight is to leverage the control-affine structure of many nonlinear dynamics of interest,
which enables principled optimization-based approaches for controller synthesis.
We propose two distinct methods for incorporating this structure into random basis functions and
formalize their representation guarantees by showing that they approximate functions in a Reproducing Kernel Hilbert Space (RKHS).  
One of our methods approximates the \emph{Affine Dot Product~(ADP) kernel} proposed by~\citet{contro1}, while the other corresponds to a novel \emph{Affine Dense (AD) kernel} that we propose.
The RF methods significantly reduce the computational time and memory complexity
compared to their kernel counterparts.

To showcase the utility of an explicit control-affine structure, we present a case study for nonlinear control in Section~\ref{sec:case_study_CFC}.
Our data-driven approach is based on \emph{Control Certificate Functions} (CCFs), which are utilized to synthesize controllers that provably achieve properties such as safety and stability~\citep{ccf}. 
CCFs have been used in a range of applications from robotics to multi-agent systems \citep{1,2,4,6},
including in a data-driven manner~\citep{contro1,adp,ccf,choi2023constraint}.
Simulations on a double inverted pendulum illustrate the benefits of our models when used in a \emph{certainty-equivalent} manner.
In Appendix \ref{affinenessinu}, we additionally
derive uncertainty estimates analogous to those of Gaussian process (GP) regression, and use them to propose a robust data-driven controller in Appendix \ref{sec:robust_ccf}.
We highlight that the approximation methods that we propose may be broadly of interest for any control application which makes use of GPs~\citep{ellipsoid, Learningbased_nonlinearMPC, NMPC,cautiousMPC,stochasticMPC}.


\section{Problem Setting and Preliminaries}

In this work, we consider an affine modelling and prediction problem inspired by applications in data-driven control. 
We first define the general problem of interest, and then give several examples that arise in the context of learning for dynamics and control.

\begin{definition}[Control-affine modelling problem]\label{def:mainprob}
	For data of the form $\{(\vx_i, \vu_i, z_i)\}_{i=1}^N$, find a function $\hat h\!:\!\mathcal X\!\times \!\mathcal U\!\to\!\mathbb R$ which i) is affine in its second argument and ii) accurately models the relationship between $(\vx, \vu)$ and $z$, i.e. $\hat h(\vx_i,\vu_i)$ is not far from $z_i$.
\end{definition}

Such a modelling problem naturally arises in applications involving nonlinear control-affine systems.
The dynamics are described in either continuous or discrete time: 
\begin{align}
	\dot{\vx} = \vf(\vx) + \vg(\vx)\vu\quad\text{or}\quad \vx_{t+1} = \vf(\vx_t) + \vg(\vx_t)\vu_t.
	\label{eq:dynamics}
\end{align}
Here, $\vx \! \in \!\mathcal{X} \!\subseteq \!\mathbb{R}^n$ is the system state, and $\vu\! \in\!\mathcal U \!\subseteq \!\mathbb{R}^m$ is the control input.
The nonlinear function $\vf$ determines the evolution of the state in the absence of control inputs, while $\vg$ models state-dependent actuation.
Control-affine dynamics arise naturally from manipulator equations~\citep{murray1991case,underactuated}, and are thus prevalent in applications like robotics.
While many systems are known to follow dynamics of the form~\eqref{eq:dynamics}, the precise form of $\vf$ and/or $\vg$ may be unknown.
Data-driven approaches enable the control of systems with entirely or partially unknown dynamics.
There are many examples of modelling tasks that arise in such data-driven control settings.

\begin{example}\label{example_01}
	Consider a model predictive control setting in which the evolution of the state itself must be predicted~\citep{lale2021model}.
	For a discrete-time control-affine system~\eqref{eq:dynamics} with unknown dynamics and direct state observation,    
	a sequence of states and inputs $\{(\vx_i, \vu_i)\}_{i=0}^N$ defines $n$ modelling problems of the form presented in Definition~\ref{def:mainprob}: one for each state dimension. 
\end{example}

\begin{example}\label{example_02}
	Consider again model predictive control, now for continuous time control-affine dynamics~\eqref{eq:dynamics}.    
	A sequence of sampled states $\{\vx_i\}_{i=0}^N$ can be used to approximate $\{\dot{\vx}_i\}_{i=1}^N$  with forward finite differencing and define $n$ modelling problems of the form presented in Definition~\ref{def:mainprob}.
\end{example}

\begin{example}\label{example_03}
	Consider Certificate Function Control~\citep{ccf}, which enforces safety or stability using a known certificate function $C\!:\!\mathcal X\!\to \!\mathbb R$.
	For continuous time control-affine dynamics~\eqref{eq:dynamics},
	such controllers require computing  $\dot{C}\!:\!\mathcal X \!\times\! \mathcal U\!\to \!\R$, which cannot be done directly when the dynamics are unknown. However, a sequence $\{C(\vx_i)\}_{i=0}^N$ can be computed from a sequence of sampled states and the known function $C$.
	Then, finite differencing approximates the time derivative, resulting in a problem of the form presented in Definition~\ref{def:mainprob}. 
\end{example}

\begin{example}\label{example_04}
	For any of the previous examples, suppose that an approximate model of the dynamics $\tilde{\vf}$ and $\tilde{\vg}$ is known.
	Then learning residual error dynamics also results in a problem of the form presented in Definition~\ref{def:mainprob} (see e.g., \citet{ccf, adp}).
\end{example}

The examples above serve to motivate the relevance of the modelling problem in Definition~\ref{def:mainprob}.
We now turn to background and preliminaries on solving it.
Our focus is on nonparametric techniques which can model phenomena of arbitrary complexity.
We review kernel regression, which is both nonparametric and amenable to uncertainty quantification, and random features approximation, which allows for computational efficiency.


\subsection{From Linear to Kernel Regression}\label{subsec:Bayesian Regression}
We begin by reviewing regression approaches for general data containing input vectors $\{\vs_i\}_{i=1}^N \!\subset\!\R^d$ and a target output variable $\{z_i\}_{i=1}^N\!\subset\!\R$.
Our starting point is parametric linear regression, 
in which predictions depend linearly on a known nonlinear feature function of the inputs.
Let $\vphi\!:\!\R^d\!\to\!\R^D$ map an input vector $\vs\! \in \!\R^d$ to a feature vector $\vphi(\vs)\! \in \!\R^D$.
The feature function, also known as a basis function, maps the input vectors to a higher-dimensional feature space, where a linear relationship can be established more easily. 

Linear least-squares regression \citep{watson1967linear} models the relationship as $\hat h(\vs) \!= \!\hat{\vw}^\top \vphi(\vs)$, where the parameter $\hat{\vw}\!\in\!\mathbb R^D$ is learned from data by solving
\begin{align}
	\min_{\vw \in\mathbb R^D } \sum_{i=1}^N (\vphi(\vs_i)^\top \vw - z_i)^2 + \lambda \|\vw\|_2^2, 
\end{align}
where $\lambda\geq 0$ is a regularization parameter.
Let the matrix $\Phi\!\in\!\R^{N\times D}$ and the vector $\vz \!\in \!\R^N$ be the aggregation of rows $\{\vphi(\vs_i)^\top\}_{i=1}^N$ and $\{z_i\}_{i=1}^N$, respectively. 
Then the prediction is
\begin{align}
	& \hat h(\vs) = \vphi(\vs)^\top(\Phi^\top\Phi+\lambda \vI_D)^{-1}\Phi^\top\vz = \vphi(\vs)^\top\Phi^\top(\Phi\Phi^\top+\lambda \vI_N)^{-1}\vz.
	\label{prediction} 
\end{align}

The first equality is the closed-form solution to the least squares objective, and the second leverages the \emph{kernel trick}~\citep{scholkopf2018learning,muller2018introduction}.
The significance of this reformulation is that the vector $\Phi\vphi(\vs)=: \vk(\vs)$ and matrix $\Phi\Phi^\top =: K$  can be computed using only inner products of basis functions evaluated on training data.
This is attractive because the class of basis functions determine the complexity and richness of the modelled relationship between inputs $\vs$ and outputs $z$. 
While low-dimensional bases may suffice for highly structured processes, generally, a suitable compact basis may not be known a priori.

Kernel methods allow for expressive basis functions of arbitrarily high or infinite dimension.
A kernel function $k\!:\!\R^d\!\times \!\R^d\!\to \!\R$  generalizes inner products between basis functions, and is used as a nonparametric approach for representing complex functions.
Appropriately defined, kernel ridge regression corresponds to regression in Reproducing Kernel Hilbert Spaces (RKHS)~\citep{wendland2004scattered}. 
Many RKHS are dense in the set of continuous functions, enabling arbitrarily accurate representation of continuous functions via kernel regression.
The following lemma presents a sufficient condition for checking that a kernel function defines a RKHS.
It is a direct implication of the Moore–Aronszajn theorem and Lemma 1 in \citet{berlinet2011reproducing}.
\begin{lemma}\label{lem:rkhs}
	Let $\mathcal H$ be some Hilbert space with inner product $\langle \cdot,\cdot\rangle$.
	A function $k\!:\!\R^d\!\times \!\R^d\!\to \!\R$ is a reproducing kernel if there exists a mapping $\varphi\!:\!\R^d\! \to \!\mathcal H$ such that $k(\vs,\vs')\!=\!\langle\varphi(\vs),\varphi(\vs')\rangle$.
\end{lemma}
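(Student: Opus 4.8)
The plan is to verify the two conditions in Lemma~\ref{lem:rkhs} directly: namely, that there is a Hilbert space $\mathcal H$ and a feature map $\varphi$ whose inner product reproduces the kernel. Since the statement is phrased as a sufficient condition — "if there exists a mapping $\varphi$ such that $k(\vs,\vs') = \langle \varphi(\vs),\varphi(\vs')\rangle$" — the task is really to check that the hypothesis of the Moore--Aronszajn theorem is met, i.e.\ that such a $k$ is symmetric and positive semidefinite, and then invoke Moore--Aronszajn to produce the associated RKHS. So the first step I would take is to record the two immediate algebraic consequences of the factorization assumption: symmetry, $k(\vs,\vs') = \langle\varphi(\vs),\varphi(\vs')\rangle = \langle\varphi(\vs'),\varphi(\vs)\rangle = k(\vs',\vs)$, using symmetry of the Hilbert-space inner product; and positive semidefiniteness, which for any finite collection $\vs_1,\dots,\vs_M$ and scalars $c_1,\dots,c_M$ gives $\sum_{i,j} c_i c_j k(\vs_i,\vs_j) = \sum_{i,j} c_i c_j \langle\varphi(\vs_i),\varphi(\vs_j)\rangle = \big\lVert \sum_i c_i \varphi(\vs_i)\big\rVert^2 \ge 0$.

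Once symmetry and positive semidefiniteness are in hand, the second step is simply to appeal to the Moore--Aronszajn theorem: every symmetric positive-semidefinite kernel on a set is the reproducing kernel of a unique RKHS of functions on that set. That gives exactly the conclusion that $k$ is a reproducing kernel. For completeness I would also sketch the canonical construction so the lemma is self-contained: take $\mathcal H_0 = \mathrm{span}\{k(\cdot,\vs) : \vs \in \R^d\}$, define $\langle k(\cdot,\vs), k(\cdot,\vs')\rangle_{\mathcal H_0} = k(\vs,\vs')$ and extend bilinearly, check this is a well-defined inner product (here positive semidefiniteness plus Cauchy--Schwarz ensures that the bilinear form does not depend on the chosen representation of an element), and complete $\mathcal H_0$ to obtain the RKHS $\mathcal H$; the reproducing property $\langle f, k(\cdot,\vs)\rangle = f(\vs)$ then holds by construction, and in particular $k(\vs,\vs') = \langle k(\cdot,\vs), k(\cdot,\vs')\rangle$, which matches the factorization with $\varphi(\vs) = k(\cdot,\vs)$.

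The main obstacle, if there is one, is not conceptual but bookkeeping: verifying that the pre-inner-product on $\mathcal H_0$ is well defined and positive definite (not merely semidefinite) on the completion, which is the one place the positive-semidefiniteness inherited from the factorization is genuinely used, and handling the completion so that elements of $\mathcal H$ remain honest functions on $\R^d$ (this uses the bound $|f(\vs)| \le \lVert f\rVert \sqrt{k(\vs,\vs)}$ from Cauchy--Schwarz to show pointwise evaluation is continuous and survives limits). Since the excerpt explicitly says the lemma "is a direct implication of the Moore--Aronszajn theorem and Lemma 1 in \citet{berlinet2011reproducing}," I expect the intended proof is short: reduce the factorization hypothesis to symmetry and positive semidefiniteness in one or two lines, then cite those two results. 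I would present it that way, relegating the canonical construction to a remark or a citation rather than reproving Moore--Aronszajn.
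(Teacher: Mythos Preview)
Your proposal is correct and matches the paper's treatment: the paper does not give a proof of this lemma at all, simply stating that it ``is a direct implication of the Moore--Aronszajn theorem and Lemma 1 in \citet{berlinet2011reproducing}.'' Your plan---derive symmetry and positive semidefiniteness from the factorization $k(\vs,\vs')=\langle\varphi(\vs),\varphi(\vs')\rangle$ and then cite Moore--Aronszajn---is exactly the right reduction, and your closing remark that the construction should be relegated to a citation is precisely what the paper does.
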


In addition to their expressivity, kernel methods are amenable to theoretical guarantees and uncertainty characterization. 
Popularized from the Bayesian perspective as Gaussian Process (GP) regression~\citep{gpbook},
confidence intervals on kernel predictions can be derived even in frequentist settings~\citep{GPtheorem}. 
We discuss this perspective further in appendix \ref{sec:gp_background} as it is useful for robust control.
The drawback of kernel methods is computation. Algorithms have superlinear complexity in the number of data points. In particular, computing the kernel weights can be prohibitively expensive for large datasets. Solving \eqref{prediction} generally requires $O(N^3)$ time and $O(N^2)$ memory.

\subsection{Random Feature Approximation}\label{sec:rfbg}

Rather than using kernel methods directly, we propose basis functions which are expressive, general purpose, and yet finite-dimensional. 
Consider a parametric family of basis functions $b\!:\!\R^d\!\times \!\R^p\!\to \!\R$.
Then for parameters $\{\vvartheta_j\}_{j=1}^D \!\subset \!\R^p$ sampled i.i.d. from a fixed probability distribution $p(\vvartheta)$, the random basis is defined as 
$\vphi(\vs)=\begin{bmatrix}b(\vs;\vvartheta_1) & b(\vs;\vvartheta_2)& \dots &  b(\vs;\vvartheta_D)\end{bmatrix}^\top$.
Random basis functions of this form approximate rich class of functions in the sense that $\vphi(\vs)^\top\vphi(\vs')$ is a Monte-Carlo estimator which converges uniformly to a kernel $k(\vs,\vs')$~\citep{randombasis}. 
The rate of convergence is controlled by the feature dimension $D$ and the particular kernel depends on the definition of $b(\cdot;\vvartheta)$ and $p(\vvartheta)$.

The underlying observation behind random features is a simple consequence of Bochner’s Theorem~\citep{avron2017random}: For every normalized shift-invariant kernel (i.e., $k(0) = 1$), there is 
a probability density function $p(\cdot)$ on $\R^d$ such that
\begin{align}\label{bochner}
	k(\vs,\vs') 
	=  \int_{ \vvartheta \in \R^d} e^{-i 2 \pi  \vvartheta^\top (\vs - \vs')}  p(\vvartheta) d \vvartheta =: \mathcal{F}(p(\vvartheta)).
\end{align}
In other words, the inverse Fourier transform $\mathcal{F}^{-1}$ of the kernel $k(\cdot)$ is the probability density function $p(\cdot)$. This implies a one-to-one correspondence between any shift invariant kernel and a random features basis.
\begin{example}[Random Fourier basis]\label{ex:rff}
	The random Fourier basis consists of sinusoidal nonlinearities of the form 
	$b(\vs;\vvartheta)\!=\!\begin{bmatrix} \cos(\vvartheta^\top \vs ) & \sin(\vvartheta^\top \vs)\end{bmatrix}.$
	When $\vvartheta$ is sampled from a Gaussian distribution, i.e., $p(\vvartheta) \!\sim \!\mathcal{N}(0,2\gamma \vI_d)$,
	then the random Fourier basis approximates the radial basis function~(RBF) kernel $k(\vs,\vs')\!= e^{-\frac{1}{\gamma}\|\vs-\vs'\|_2^2}$.\end{example}

The randomized nonlinear expansions provide a compact and computationally efficient alternative to the RKHS representations.
This is particularly attractive when the number of data points is large.
Recall from the prior section the feature matrix $\Phi\! \in \!\R^{N \!\times D}$ appearing in the prediction~\eqref{prediction}. 
Since $\vphi(\vs)\!\in \!\R^D$ and $\Phi^\top\Phi$ is a $D\!\times\!D$ matrix, the computation only depends on the dimension of our feature space. 
Hence, we can compute a random feature approximation in $O(ND^2)$ time and $O(ND)$ memory, which is computationally attractive when $D\!<\!N$. 


\section{Random Features for Control-Affine Modelling}\label{sec:RF_for_CAM}
In this section, we use ideas from kernel regression and random feature approximations to propose representations which capture the control-affine structure from Definition~\ref{def:mainprob}. 
We first present two general approaches for defining basis functions that are affine in the control variable. 
Then, we present RKHS representation guarantees by showing that the random basis approximates particular kernels.
Finally, we present experiments which illustrate the predictive modelling capabilities of the proposed methods.

\subsection{Control-Affine Basis Functions}\label{sec:comparison}
The control-affine modelling problem (Definition~\ref{def:mainprob}) allows for complex dependence on the state variable, but imposes a restriction on the control variable.
Given any arbitrary state-dependent bases $\vpsi_i\!: \!\mathcal{X} \!\to\! \R^D$ for $i=1,...,m+1$, we propose the following two basis functions that are affine in the control variable $\vu$. 

\begin{definition}[Affine dot product (ADP) bases]\label{def:adpb} 
	The basis $\vphi_c\!:\!{\mathcal X}\!\times\! \mathcal U\!\rightarrow\! \R^{D(m+1)}$, given by \[\vphi_c(\vx,\vu)=\begin{bmatrix} u_1\vpsi_1(\vx)^\top &\dots&  u_{m}\vpsi_{m} (\vx)^\top &\vpsi_{m+1}(\vx)^\top\end{bmatrix}^\top,\]
	is the ADP basis of $m\!+\!1$ individual basis functions $\vpsi_i\!:\!{\mathcal X}\!\rightarrow \!\R^D$, $i=1,\dots,m\!+\!1$. 
\end{definition}

\textcolor{black}{As we show in the following section~(see Theorem~\ref{thrm:adp_approx}), the ADP bases approximate the affine dot product (ADP) kernel, which was first proposed by~\citet{contro1}.} 
Note that the ADP bases can also
be written as the product of  $\mathrm{blkdiag}(\vpsi_1(\vx),...,\vpsi_{m+1}(\vx))$ with the vector $[\vu^\top~~1]^\top$. 
This basis expands the feature dimension for every dimension of the control input, resulting in dimension which scales by $m\!+\!1$.
This observation motivates a second proposed representation.

\begin{definition}[Affine dense (AD) bases]\label{def:adb}
	The basis
	$\vphi_d\!:\!\mathcal X\!\times\!\mathcal U\!\rightarrow\! \R^D$, given by 
	\[\vphi_d(\vx,\vu)=\begin{bmatrix}\vpsi_1(\vx) \dots \vpsi_{m\!+\!1}(\vx)\end{bmatrix}\begin{bmatrix}\vu \\ 1\end{bmatrix}\]
	is the AD basis of $m+1$ individual basis functions $\vpsi_i\!:\!{\mathcal X}\!\rightarrow \!\R^D$, $i=1,\dots m\!+\!1$. 
\end{definition}

Compared with the ADP basis, the AD basis is more compact.
For individual basis functions of dimension $D$, the AD basis will be of dimension $D$, whereas the ADP basis will be of dimension $D(m\!+\!1)$.
Considering the linear regression use case,
this means that AD has $O(ND^2)$ time and $O(ND)$ memory complexity, whereas for ADP it is $O(N(m+1)^2D^2)$ and $O(N(m+1)D)$.

Leveraging ideas from random Fourier features, we propose control-affine basis functions constructed with state-dependent random Fourier basis functions $\vpsi_i$ for $i=1,\dots,m\!+\!1$: 
\begin{align}\label{eq:rfx}
	\vpsi_i(\vx):= \sqrt{2/D}\begin{bmatrix}\sin(\vvartheta_{i,1}^\top \vx) &
		\cos(\vvartheta_{i,1}^\top \vx) & \dots
		&\sin(\vvartheta_{i,D/2}^\top \vx) & \cos(\vvartheta_{i,D/2}^\top \vx)\end{bmatrix}^\top,
\end{align}
where weights $\{\vvartheta_{i,j}\}_{j=1}^{D/2}$ are drawn i.i.d. from the distribution $p_i(\vvartheta)$ for $i\!=\!1,\dots,m\!+\!1$. 
As described in Example~\ref{ex:rff},
each of these individual basis functions approximates a shift invariant kernel corresponding to the Fourier transform of the density $p_i(\vvartheta)$~\citep{randombasis,rahimi}.
In other words,
$\mathbb E_{\vvartheta\sim p_i(\cdot)}[\vpsi_i(\vx)^\top\vpsi_i(\vx')]=k_i(\vx-\vx')$ where $k_i(\vv)=\mathcal F(p_i(\vvartheta))[\vv]$.


\subsection{Representation Guarantees} \label{subsec:rep_guarantee}

We now develop representation guarantees for the compound bases by showing which kernels they approximate.
The affine dot product (ADP) kernel was first proposed by~\citet{contro1} for systems with control-affine dynamics.
\begin{definition}[Affine dot product (ADP) kernel] \label{def:ADP_kernel}
	Define $k_{c}\!:\!{\mathcal X}\!\times\!\mathcal \!U \!\times \!{\mathcal X}\!\times\!\mathcal U \!\rightarrow \!\mathbb{R}$, given by
		\[k_{c}((\vx,\vu), (\vx',\vu')) := \begin{bmatrix}\vu^\top & 1\end{bmatrix} \diag(k_1(\vx, \vx'), \cdots, k_{m+1}(\vx, \vx')) \begin{bmatrix}\vu'^\top & 1\end{bmatrix}^\top, \]
	as the Affine Dot Product (ADP) compound kernel of $m\!+\!1$ individual kernels $k_i\!:\!\mathcal{X}\!\times\! \mathcal{X} \!\rightarrow \!\R$.
\end{definition}

The following theorem shows that the ADP basis approximates the ADP kernel.
\begin{theorem}[ADP Approximation] \label{thrm:adp_approx}
	For $i=1,...,m\!+\!1$, suppose the basis functions $\vpsi_i$ are defined according to~\eqref{eq:rfx} with $p_i$ the inverse Fourier transform of a shift invariant kernel $k_i$. 
	Let $\vphi_c$ be the ADP compound basis of $\vpsi_i$ and let $k_c$ the compound ADP kernel of $k_i$.
	Then $$\mathbb E[\vphi_c(\vx,\vu)^\top \vphi_c(\vx',\vu')]=k_{c}((\vx,\vu), (\vx',\vu')).$$
\end{theorem}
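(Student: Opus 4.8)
The plan is to compute the expectation directly, exploiting the block structure of the ADP basis and the mutual independence of the random weights across the $m+1$ constituent bases, so that everything collapses onto the standard random Fourier feature identity already recorded in Section~\ref{sec:comparison}.

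First I would expand the inner product. By Definition~\ref{def:adpb},
\[
\vphi_c(\vx,\vu)^\top\vphi_c(\vx',\vu') \;=\; \sum_{i=1}^{m} u_i u_i'\,\vpsi_i(\vx)^\top\vpsi_i(\vx') \;+\; \vpsi_{m+1}(\vx)^\top\vpsi_{m+1}(\vx').
\]
The controls $\vu,\vu'$ are deterministic and the weight families $\{\vvartheta_{i,j}\}_{j=1}^{D/2}$ are drawn independently across $i$, so linearity of expectation gives
\[
\mathbb E\big[\vphi_c(\vx,\vu)^\top\vphi_c(\vx',\vu')\big] \;=\; \sum_{i=1}^{m} u_i u_i'\,\mathbb E\big[\vpsi_i(\vx)^\top\vpsi_i(\vx')\big] \;+\; \mathbb E\big[\vpsi_{m+1}(\vx)^\top\vpsi_{m+1}(\vx')\big].
\]

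Next I would evaluate the per-block expectations. From~\eqref{eq:rfx} together with the product-to-sum identity $\cos a\cos b+\sin a\sin b=\cos(a-b)$, one gets $\vpsi_i(\vx)^\top\vpsi_i(\vx')=\frac{2}{D}\sum_{j=1}^{D/2}\cos\!\big(\vvartheta_{i,j}^\top(\vx-\vx')\big)$, whose $D/2$ summands are i.i.d.\ copies of $\cos(\vvartheta^\top(\vx-\vx'))$ with $\vvartheta\sim p_i$. Taking expectations and using that $k_i$ is real-valued and shift invariant, so that $p_i$ is a symmetric density and the imaginary part of the integrand in~\eqref{bochner} vanishes, recovers $\mathbb E[\vpsi_i(\vx)^\top\vpsi_i(\vx')]=k_i(\vx-\vx')=k_i(\vx,\vx')$---precisely the identity stated at the end of Section~\ref{sec:comparison}.

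Finally I would substitute and match against the kernel. This turns the expectation into $\sum_{i=1}^{m}u_iu_i'\,k_i(\vx,\vx')+k_{m+1}(\vx,\vx')$, which is exactly the expansion of the quadratic form $\begin{bmatrix}\vu^\top&1\end{bmatrix}\diag\!\big(k_1(\vx,\vx'),\dots,k_{m+1}(\vx,\vx')\big)\begin{bmatrix}\vu'^\top&1\end{bmatrix}^\top$ defining $k_c((\vx,\vu),(\vx',\vu'))$ in Definition~\ref{def:ADP_kernel}, completing the argument. The computation is entirely routine; the only step warranting mild care is the reduction of Bochner's complex-exponential representation to a real cosine expectation, which rests on the symmetry of $p_i$ inherited from $k_i$ being a real kernel.
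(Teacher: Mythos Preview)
Your proof is correct and follows essentially the same approach as the paper: expand $\vphi_c(\vx,\vu)^\top\vphi_c(\vx',\vu')$ into its block-diagonal form, then apply the per-block identity $\mathbb E[\vpsi_i(\vx)^\top\vpsi_i(\vx')]=k_i(\vx-\vx')$. You spell out the Bochner/trigonometric details that the paper simply invokes ``by assumption,'' but the structure of the argument is identical.
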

The result follows by relating the dot product of features to the diagonal matrix in the ADP kernel.
We defer all formal proofs to Appendix \ref{sec:proofs}.

An alternative way to understand the ADP random feature approximation is to interpret the $(m\!+\!1)\!\times \!(m\!+\!1)$ diagonal matrix of kernels as an operator valued kernel.
This operator valued kernel is the sum of $m\!
+\!1$ \emph{decomposable kernels}, as defined by \citet{brault2016random} (Definition 3).
The ADP block diagonal matrix of basis functions can be interpreted through their framework as a random feature approximation for this operator valued kernel.




We now turn to the affine dense basis. First, we define a novel Affine Dense compound kernel.

\begin{definition}[Affine dense (AD) kernel] \label{def:AD_kernel}
	For $m\!+\!1$ individual kernels $k_i\!:\!\mathcal{X}\!\times \!\mathcal{X} \!\rightarrow \!\R$, 
	let $\vD(\vx,\vx')$ be the diagonal matrix with $i^{th}$ entry as $k_i(\vx-\vx')$, and $\vA(\vx,\vx')$ a matrix with zero on the diagonal and $[\vA(\vx,\vx')]_{ij}=k_i(\vx)k_j(\vx')$  for $i\neq j\in[m+1]$. 
	Then, define the Affine Dense (AD) compound kernel as
	$k_{d}:{\mathcal X}\!\times\!\mathcal U \!\times \!{\mathcal X}\!\times\!\mathcal \!U \!\rightarrow \!\mathbb{R}$, given by
		\[k_{d}((\vx,\vu), (\vx',\vu')) := \begin{bmatrix}\vu^\top & 1\end{bmatrix} \left( \vD(\vx,\vx')+\vA(\vx,\vx') \right)
		\begin{bmatrix}\vu'^\top & 1\end{bmatrix}^\top. \]
\end{definition}

Notice that the diagonal matrix $\vD(\vx,\vx')$ in the AD kernel is similar to the ADP kernel.
However, the AD kernel additionally includes the dense matrix $\vA(\vx,\vx')$.
Due to this second dense term, the AD compound kernel is not shift invariant in $\vx$.
As a result, it is not possible to view $\vA\!+\!\vD$ as a shift invariant operator-valued kernel, and thus the results of \cite{brault2016random} cannot be used to derive a random features approximation.
Furthermore, it is not immediately clear whether the AD kernel is indeed a valid reproducing kernel.
We therefore begin by showing that it is.

\begin{theorem}[AD kernel]\label{thm:ad-kernel}
	Let $k_d((\vx,\vu), (\vx',\vu'))$ be as in Definition~\ref{def:AD_kernel}. Suppose each $k_i(\vx,\vx')$ is a normalized shift invariant reproducing kernel. Then, $k_d((\vx,\vu), (\vx',\vu'))$ is a reproducing kernel.
\end{theorem}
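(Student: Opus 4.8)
The route I would take is to verify the hypothesis of Lemma~\ref{lem:rkhs}: it suffices to exhibit a Hilbert space $\mathcal H$ and a feature map $\varphi\!:\!\mathcal X\times\mathcal U\to\mathcal H$ with $k_d((\vx,\vu),(\vx',\vu'))=\langle\varphi(\vx,\vu),\varphi(\vx',\vu')\rangle$. Write $\bar\vu:=\begin{bmatrix}\vu^\top & 1\end{bmatrix}^\top\in\R^{m+1}$ (so $\bar u_i=u_i$ for $i\le m$ and $\bar u_{m+1}=1$) and $\vec k(\vx):=\begin{bmatrix}k_1(\vx)&\cdots&k_{m+1}(\vx)\end{bmatrix}^\top$, and recall that each $k_i$ is real, even, and normalized ($k_i(\mathbf 0)=1$). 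The first step is a purely algebraic rewriting of the kernel matrix: checking entrywise that the rank-one matrix and the subtracted diagonal coincide on the diagonal while the subtraction leaves the off-diagonal untouched, one gets $\vA(\vx,\vx')=\vec k(\vx)\vec k(\vx')^\top-\diag\big(k_1(\vx)k_1(\vx'),\dots,k_{m+1}(\vx)k_{m+1}(\vx')\big)$, hence
\[ \vD(\vx,\vx')+\vA(\vx,\vx')=\diag\big(k_1(\vx-\vx')-k_1(\vx)k_1(\vx'),\dots,k_{m+1}(\vx-\vx')-k_{m+1}(\vx)k_{m+1}(\vx')\big)+\vec k(\vx)\vec k(\vx')^\top , \]
and multiplying by $\bar\vu$ on the left and $\bar\vu'$ on the right,
\[ k_d((\vx,\vu),(\vx',\vu'))=\sum_{i=1}^{m+1}\bar u_i\bar u_i'\big(k_i(\vx-\vx')-k_i(\vx)k_i(\vx')\big)+\big(\vec k(\vx)^\top\bar\vu\big)\big(\vec k(\vx')^\top\bar\vu'\big). \]

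The last term is already a product of scalar features $(\vx,\vu)\mapsto\vec k(\vx)^\top\bar\vu\in\R$, so the remaining work is to realize each summand $k_i(\vx-\vx')-k_i(\vx)k_i(\vx')$ as an inner product. For this I would use Bochner's theorem~\eqref{bochner}: since $k_i$ is normalized and shift invariant there is a probability density $p_i$ on $\R^n$ with $k_i(\vv)=\int_{\R^n}e^{-i2\pi\vvartheta^\top\vv}p_i(\vvartheta)d\vvartheta$, which because $k_i$ is real equals $\int_{\R^n}\cos(2\pi\vvartheta^\top\vv)p_i(\vvartheta)d\vvartheta$. Define $\xi_i\!:\!\mathcal X\to L^2(\R^n,p_i;\R^2)$ by $\xi_i(\vx)[\vvartheta]:=\big(\cos(2\pi\vvartheta^\top\vx),\sin(2\pi\vvartheta^\top\vx)\big)$, exactly as in~\eqref{eq:rfx}. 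Then the angle-subtraction identity gives $\langle\xi_i(\vx),\xi_i(\vx')\rangle=k_i(\vx-\vx')$, while $\|\xi_i(\vx)\|^2=\int_{\R^n}p_i(\vvartheta)d\vvartheta=1$ and $\langle\xi_i(\vx),\xi_i(\mathbf 0)\rangle=\int_{\R^n}\cos(2\pi\vvartheta^\top\vx)p_i(\vvartheta)d\vvartheta=k_i(\vx)$ with $\|\xi_i(\mathbf 0)\|=1$. Setting $\tilde\xi_i(\vx):=\xi_i(\vx)-k_i(\vx)\,\xi_i(\mathbf 0)$, i.e. the orthogonal projection of $\xi_i(\vx)$ off the line spanned by $\xi_i(\mathbf 0)$, a direct expansion yields $\langle\tilde\xi_i(\vx),\tilde\xi_i(\vx')\rangle=k_i(\vx-\vx')-k_i(\vx)k_i(\vx')$.

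Finally I would assemble $\mathcal H:=\big(\bigoplus_{i=1}^{m+1}L^2(\R^n,p_i;\R^2)\big)\oplus\R$ with the direct-sum inner product and the feature map
\[ \varphi(\vx,\vu):=\big(\bar u_1\,\tilde\xi_1(\vx),\ \dots,\ \bar u_{m+1}\,\tilde\xi_{m+1}(\vx),\ \vec k(\vx)^\top\bar\vu\big)\in\mathcal H ; \]
expanding $\langle\varphi(\vx,\vu),\varphi(\vx',\vu')\rangle$ term by term recovers the displayed formula for $k_d$, so Lemma~\ref{lem:rkhs} gives that $k_d$ is a reproducing kernel. I expect the only genuine obstacle to be the bookkeeping around the zero diagonal of $\vA$: one must peel the rank-one piece $\vec k(\vx)\vec k(\vx')^\top$ cleanly off of $\vD$ and then recognize that what survives on the diagonal, $k_i(\vx-\vx')-k_i(\vx)k_i(\vx')$, is still a bona fide positive-semidefinite kernel---which is exactly what the projection $\tilde\xi_i$ (equivalently, Cauchy--Schwarz giving $|k_i(\vx)|\le 1$) certifies. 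Symmetry and positive-definiteness of $k_d$ then come for free from the explicit feature map, so no separate verification of those is needed.
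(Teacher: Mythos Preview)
Your proof is correct and follows the same overall route as the paper: the identical algebraic decomposition $\vD+\vA=\diag\big(k_i(\vx-\vx')-k_i(\vx)k_i(\vx')\big)+\vec k(\vx)\vec k(\vx')^\top$, then the observation that the rank-one piece is trivially an inner product, and finally the key lemma that each $k_i(\vx-\vx')-k_i(\vx)k_i(\vx')$ is itself a reproducing kernel, proved by projecting a feature map off the vector $\xi_i(\mathbf 0)$.

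The only difference is in how that last lemma is executed. The paper works abstractly: it takes any feature map $\vvarphi$ for $k_i$ (guaranteed by Moore--Aronszajn), writes $k_i(\vx-\vx')-k_i(\vx)k_i(\vx')=\langle\vvarphi(\vx),(\mathcal I-\mathcal P)\vvarphi(\vx')\rangle$ with $\mathcal P$ the rank-one projection onto $\vvarphi(0)$, and then invokes the square root of the bounded positive self-adjoint operator $\mathcal I-\mathcal P$ to obtain a symmetric feature $\mathcal L\vvarphi$. You instead instantiate a concrete feature map via Bochner, $\xi_i(\vx)\in L^2(\R^n,p_i;\R^2)$, and write down the projected feature $\tilde\xi_i(\vx)=\xi_i(\vx)-k_i(\vx)\xi_i(\mathbf 0)$ explicitly, bypassing the operator-square-root step entirely. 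Your version is a bit more elementary and ties in nicely with the random-feature construction~\eqref{eq:rfx}; the paper's version does not need the Bochner representation and so would apply verbatim to any normalized reproducing kernel with $k(\vx,\vx')=k(\vx-\vx')$ even without a spectral density. Under the paper's standing assumptions the two are equivalent.
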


To prove this result, we use the crucial (but non-obvious) claim that
if $k(\vx,\vx')$ is a normalized shift invariant reproducing kernel, then $k(\vx,\vx')\!-\!k(\vx)k(\vx')$ is also a reproducing kernel.
To prove that the claim is true, we construct an explicit feature mapping of the form required in Lemma~\ref{lem:rkhs}.
With the claim in hand, the proof follows by algebraic manipulations and the fact that the set of reproducing kernels is closed under addition.
Therefore, the AD kernel $k_d$ is a reproducing kernel and thus defines a RKHS.
We next show that the AD basis functions approximate this RKHS.

\begin{theorem}[Affine-dense kernel approximation] \label{thrm:Affine-dense rf}
	Suppose that for $i=1,...,m\!+\!1$ the basis functions $\vpsi_i$ are defined according to~\eqref{eq:rfx} with $p_i$ the inverse Fourier transform of a shift invariant kernel $k_i$. 
	Let $\vphi_d$ be the AD compound basis of $\vpsi_i$ and let $k_d$ be the compound AD kernel of $k_i$.
	Then
	$\mathbb E[\vphi_d(\vx,\vu)^\top \vphi_d(\vx',\vu')]=k_{d}((\vx,\vu), (\vx',\vu'))$.
\end{theorem}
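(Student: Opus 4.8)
The plan is to expand the inner product $\vphi_d(\vx,\vu)^\top\vphi_d(\vx',\vu')$ coordinatewise, push the expectation through the resulting finite sum by linearity, and evaluate two kinds of terms separately: the ``diagonal'' terms built from a single state-dependent basis $\vpsi_i$, and the ``off-diagonal'' terms coupling two distinct bases $\vpsi_i,\vpsi_j$ with $i\neq j$. Concretely, writing $\tilde\vu := \begin{bmatrix}\vu^\top & 1\end{bmatrix}^\top\in\R^{m+1}$ (and likewise $\tilde\vu'$), Definition~\ref{def:adb} gives $\vphi_d(\vx,\vu)=\sum_{i=1}^{m+1}\tilde u_i\,\vpsi_i(\vx)$, so that
\[
\vphi_d(\vx,\vu)^\top\vphi_d(\vx',\vu') \;=\; \sum_{i=1}^{m+1}\sum_{j=1}^{m+1}\tilde u_i\,\tilde u'_j\;\vpsi_i(\vx)^\top\vpsi_j(\vx'),
\]
and it suffices to compute $\mathbb E\!\left[\vpsi_i(\vx)^\top\vpsi_j(\vx')\right]$ for each pair $(i,j)$.

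For $i=j$, the weights $\{\vvartheta_{i,k}\}_k$ defining $\vpsi_i$ are i.i.d.\ from $p_i$, and the random-Fourier-feature identity already used in Section~\ref{sec:comparison} and in the proof of Theorem~\ref{thrm:adp_approx} yields $\mathbb E[\vpsi_i(\vx)^\top\vpsi_i(\vx')]=k_i(\vx-\vx')$, the $i$th diagonal entry of $\vD(\vx,\vx')$. For $i\neq j$, the families $\{\vvartheta_{i,k}\}_k$ and $\{\vvartheta_{j,k}\}_k$ are drawn independently, so the expectation factors as $\mathbb E[\vpsi_i(\vx)]^\top\mathbb E[\vpsi_j(\vx')]$. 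I would then compute the mean embedding directly from~\eqref{eq:rfx}: each cosine coordinate contributes $\sqrt{2/D}\,\mathbb E_{\vvartheta\sim p_i}[\cos(\vvartheta^\top\vx)]=\sqrt{2/D}\,k_i(\vx)$ (the same computation underlying the $i=j$ case shows $\mathbb E_{\vvartheta\sim p_i}[\cos(\vvartheta^\top\vv)]=k_i(\vv)$ for every $\vv$), while each sine coordinate contributes $\sqrt{2/D}\,\mathbb E_{\vvartheta\sim p_i}[\sin(\vvartheta^\top\vx)]=0$, since $p_i$, being the inverse Fourier transform of the real and even function $k_i$, is itself even. Summing the $D/2$ nonzero coordinates gives $\mathbb E[\vpsi_i(\vx)]^\top\mathbb E[\vpsi_j(\vx')]=\tfrac{D}{2}\cdot\tfrac{2}{D}\,k_i(\vx)k_j(\vx')=k_i(\vx)k_j(\vx')=[\vA(\vx,\vx')]_{ij}$.

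Substituting these back, the diagonal terms assemble into $\tilde\vu^\top\vD(\vx,\vx')\tilde\vu'$ and the off-diagonal terms into $\tilde\vu^\top\vA(\vx,\vx')\tilde\vu'$, whose sum is precisely $k_d((\vx,\vu),(\vx',\vu'))$ by Definition~\ref{def:AD_kernel}. The only genuinely delicate step is the off-diagonal case: one must notice that independence of the two weight families turns the cross term into the outer product of mean embeddings, and must verify that the sine coordinates of the mean embedding vanish---which is exactly where the evenness of $p_i$ (equivalently, the fact that $k_i$ is a real shift-invariant kernel) is used. The remainder is bookkeeping with the $\sqrt{2/D}$ normalization, so I do not expect further obstacles.
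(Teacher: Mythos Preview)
Your proposal is correct and follows essentially the same strategy as the paper's proof: expand into diagonal and off-diagonal terms, invoke the standard RFF identity for the former, and use independence of the weight families for $i\neq j$ to factor the cross term into a product yielding $k_i(\vx)k_j(\vx')$. The only cosmetic difference is that the paper carries out the off-diagonal computation in complex-exponential form, evaluating $\mathbb E_{\vvartheta\sim p_i,\,\vvartheta'\sim p_j}\!\big[e^{i\vvartheta^\top\vx}\,e^{-i\vvartheta'^\top\vx'}\big]$ directly, rather than coordinatewise in the real $\sin/\cos$ basis; your explicit appeal to the evenness of $p_i$ is absorbed there into Bochner's representation.
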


So far our results show that the basis functions we propose approximate kernel regression in expectation. 
When the dimension $D$ is large enough, the approximation error can be bounded with high probability~\citep{rahimi,analysis}.
In particular, \cite{analysis} show conditions under which the pointwise approximation error  is
no more than $\epsilon$ with probability depending on $D$ and $\epsilon$.
We therefore conclude with a result which shows that when the individual kernels have bounded approximation errors, so do the compound kernels.
In Appendix \ref{sec:errorbounds}, we further derive bounds on the prediction errors and confidence intervals for use in robust control.
\begin{proposition}[Kernel Approximation Errors]\label{prop:kernelapprox}
	Consider the ADP kernel $k_c(\vs,\vs')$ and the AD kernel $k_d(\vs,\vs')$ from Definitions~\ref{def:ADP_kernel} and \ref{def:AD_kernel}, respectively. Consider $\{k_i(\vx)\}_{i=1}^{m\!+\!1}$ which are the individual kernels used to construct the ADP and the AD kernels. Recall $\vpsi$ from \ref{eq:rfx}. Suppose $|k_i(\vx)| \!\leq\! 1$ and $|k_i(\vx) - \vpsi_i(\vx)^\top\vpsi_i(\vx)| \!\leq \!\epsilon$ for all $ \vx \in \mathcal{X}$ and $i \!\in\! [m+1]$. Then, we have 
	\begin{align}
		\max\{|k_c(\vs,\vs')-\vphi_c(\vs)^\top \vphi_c(\vs')|,~ |k_d(\vs,\vs')-\vphi_d(\vs)^\top \vphi_d(\vs')|\} & \leq \epsilon(\vu^\top \vu'+1)
	\end{align}
	where $\vphi_c$ and $\vphi_d$ are defined in Theorems \ref{thrm:adp_approx} and \ref{thrm:Affine-dense rf} respectively.
\end{proposition}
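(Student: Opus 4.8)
The plan is to write both compound kernels and both random-feature inner products as bilinear forms in the augmented control vector $\bar\vu := [\vu^\top~1]^\top$ (with $\vs=(\vx,\vu)$, $\vs'=(\vx',\vu')$), so that each approximation error becomes a bilinear form $\bar\vu^\top M\,\bar\vu'$ in which $M$ is the difference between an $(m{+}1)\times(m{+}1)$ ``kernel matrix'' and the Gram matrix of the state features $\vpsi_i$; one then bounds $M$ entrywise using the per-kernel hypotheses. Setting $\Psi(\vx) := [\vpsi_1(\vx)~\cdots~\vpsi_{m+1}(\vx)] \in \R^{D\times(m+1)}$, Definition~\ref{def:ADP_kernel} gives $k_c(\vs,\vs') = \bar\vu^\top\diag(k_1(\vx,\vx'),\dots,k_{m+1}(\vx,\vx'))\,\bar\vu'$, and since the ADP basis stacks the disjoint blocks $\bar u_i\vpsi_i(\vx)$, its inner product is $\vphi_c(\vs)^\top\vphi_c(\vs') = \bar\vu^\top\diag(\vpsi_1(\vx)^\top\vpsi_1(\vx'),\dots,\vpsi_{m+1}(\vx)^\top\vpsi_{m+1}(\vx'))\,\bar\vu'$, with no cross-block terms. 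Hence the ADP error is $\bar\vu^\top M_c\,\bar\vu'$ with $M_c$ diagonal and $[M_c]_{ii} = k_i(\vx,\vx') - \vpsi_i(\vx)^\top\vpsi_i(\vx')$.

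The first step is then immediate: the hypothesis bounds $|[M_c]_{ii}| \le \epsilon$ for every $i \in [m+1]$, so expanding the bilinear form and using the triangle inequality termwise gives $|k_c(\vs,\vs') - \vphi_c(\vs)^\top\vphi_c(\vs')| \le \epsilon\sum_{i=1}^{m+1}|\bar u_i\bar u_i'|$, which collects to the bound $\epsilon(\vu^\top\vu'+1)$ claimed for $k_c$. For the AD case, Definition~\ref{def:AD_kernel} and the AD basis yield the same picture with the diagonal matrix replaced by $\vD(\vx,\vx') + \vA(\vx,\vx')$ on the kernel side and by the full Gram matrix $\Psi(\vx)^\top\Psi(\vx')$ on the feature side. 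Thus the comparison matrix $M_d$ has diagonal entries $k_i(\vx,\vx') - \vpsi_i(\vx)^\top\vpsi_i(\vx')$, each at most $\epsilon$ exactly as above, and off-diagonal entries $[M_d]_{ij} = k_i(\vx)k_j(\vx') - \vpsi_i(\vx)^\top\vpsi_j(\vx')$ for $i\neq j$; once these too are shown to satisfy $|[M_d]_{ij}|\le\epsilon$, summing $\bar\vu^\top M_d\,\bar\vu'$ entrywise and taking the maximum with the ADP bound finishes the proof.

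I expect the off-diagonal discrepancies to be the main obstacle: the stated hypotheses constrain each $\vpsi_i$ in isolation and say nothing a priori about the cross inner products $\vpsi_i(\vx)^\top\vpsi_j(\vx')$, while the identity $\mathbb E[\vpsi_i(\vx)^\top\vpsi_j(\vx')]=k_i(\vx)k_j(\vx')$ underlying Theorem~\ref{thrm:Affine-dense rf} is only an equality in expectation. My plan for these terms is to telescope through a reference input $\vx_0$ with $k_i(\vx)=k_i(\vx,\vx_0)$: inserting $\vpsi_i(\vx)^\top\vpsi_i(\vx_0)$ and $\vpsi_j(\vx_0)^\top\vpsi_j(\vx')$ as intermediate factors and using $\|\vpsi_i(\cdot)\|_2=1$ together with $|k_i|\le 1$, the hypothesis controls the two ``same-index'' differences by $\epsilon$ each, leaving the purely random term $\vpsi_i(\vx)^\top\vpsi_j(\vx') - (\vpsi_i(\vx)^\top\vpsi_i(\vx_0))(\vpsi_j(\vx_0)^\top\vpsi_j(\vx'))$. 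Since the per-kernel error hypotheses do not obviously bound this last term, I would close the gap either by checking that the explicit construction~\eqref{eq:rfx} forces it to vanish (or remain $O(\epsilon)$) for these particular features, or by augmenting the hypotheses with a cross-feature bound $|\vpsi_i(\vx)^\top\vpsi_j(\vx') - k_i(\vx)k_j(\vx')|\le\epsilon$; with that in place $|[M_d]_{ij}|\le\epsilon$ and the argument concludes as above.
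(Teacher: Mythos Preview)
Your decomposition into a bilinear form $\bar\vu^\top M\,\bar\vu'$ and entrywise bounding is exactly the paper's route; for the ADP kernel and for the diagonal entries of the AD comparison matrix, your argument and the paper's coincide.

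For the off-diagonal AD entries you are right to worry, and in fact you have been more scrupulous than the paper. The paper handles $[M_d]_{ij}$ by the very product telescoping you sketch: writing $\hat k_i(\vx):=\vpsi_i(\vx)^\top\vpsi_i(0)$ (so that $|k_i(\vx)-\hat k_i(\vx)|\le\epsilon$ by hypothesis), it bounds
\[
|k_i(\vx)k_j(\vx')-\hat k_i(\vx)\hat k_j(\vx')|
=|k_i(\vx)\big(k_j(\vx')-\hat k_j(\vx')\big)+\big(k_i(\vx)-\hat k_i(\vx)\big)\hat k_j(\vx')|\le\epsilon
\]
and declares this to be the off-diagonal error. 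But the quantity that actually appears in $\vphi_d(\vs)^\top\vphi_d(\vs')$ is $\vpsi_i(\vx)^\top\vpsi_j(\vx')$, not $\hat k_i(\vx)\hat k_j(\vx')$, and for the random Fourier construction~\eqref{eq:rfx} these do not coincide: one has $\vpsi_i(\vx)^\top\vpsi_j(\vx')=\tfrac{2}{D}\sum_{\ell}\cos(\vvartheta_{i,\ell}^\top\vx-\vvartheta_{j,\ell}^\top\vx')$, which does not factor through $\vpsi_i(0)$ and $\vpsi_j(0)$. So the gap you flag is present in the paper's own argument as well. Closing it cleanly requires exactly your second option, an explicit cross-feature bound $|\vpsi_i(\vx)^\top\vpsi_j(\vx')-k_i(\vx)k_j(\vx')|\le\epsilon$; this does hold with high probability by a separate Hoeffding-type argument using the independence of the samples $\{\vvartheta_{i,\ell}\}$ and $\{\vvartheta_{j,\ell}\}$, but it is not a consequence of the per-kernel hypotheses stated in the proposition.
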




\begin{figure}[t!]
	\begin{centering}
		\includegraphics[width=0.48\linewidth]{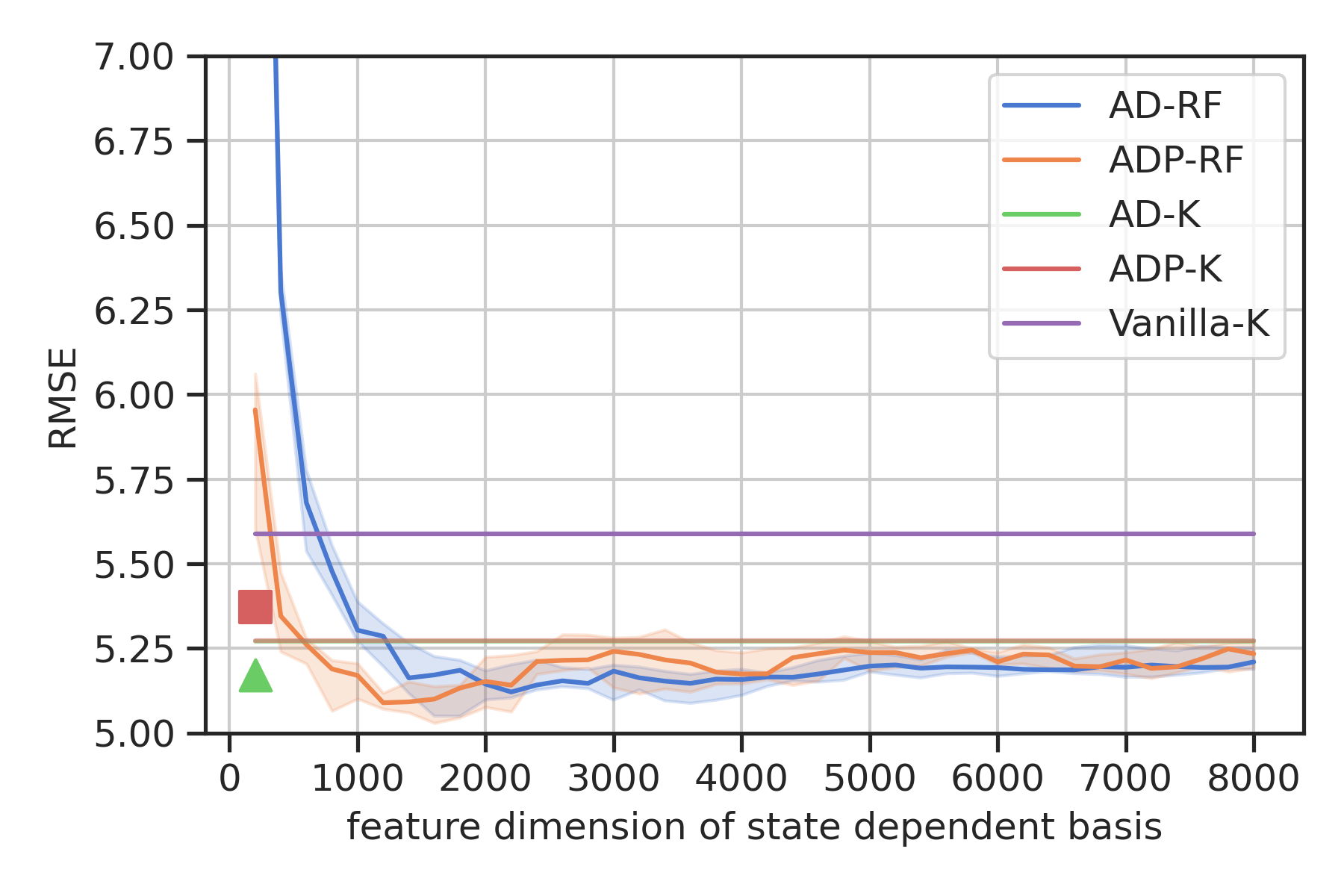} 
		~~~
		\includegraphics[width=0.48\linewidth]{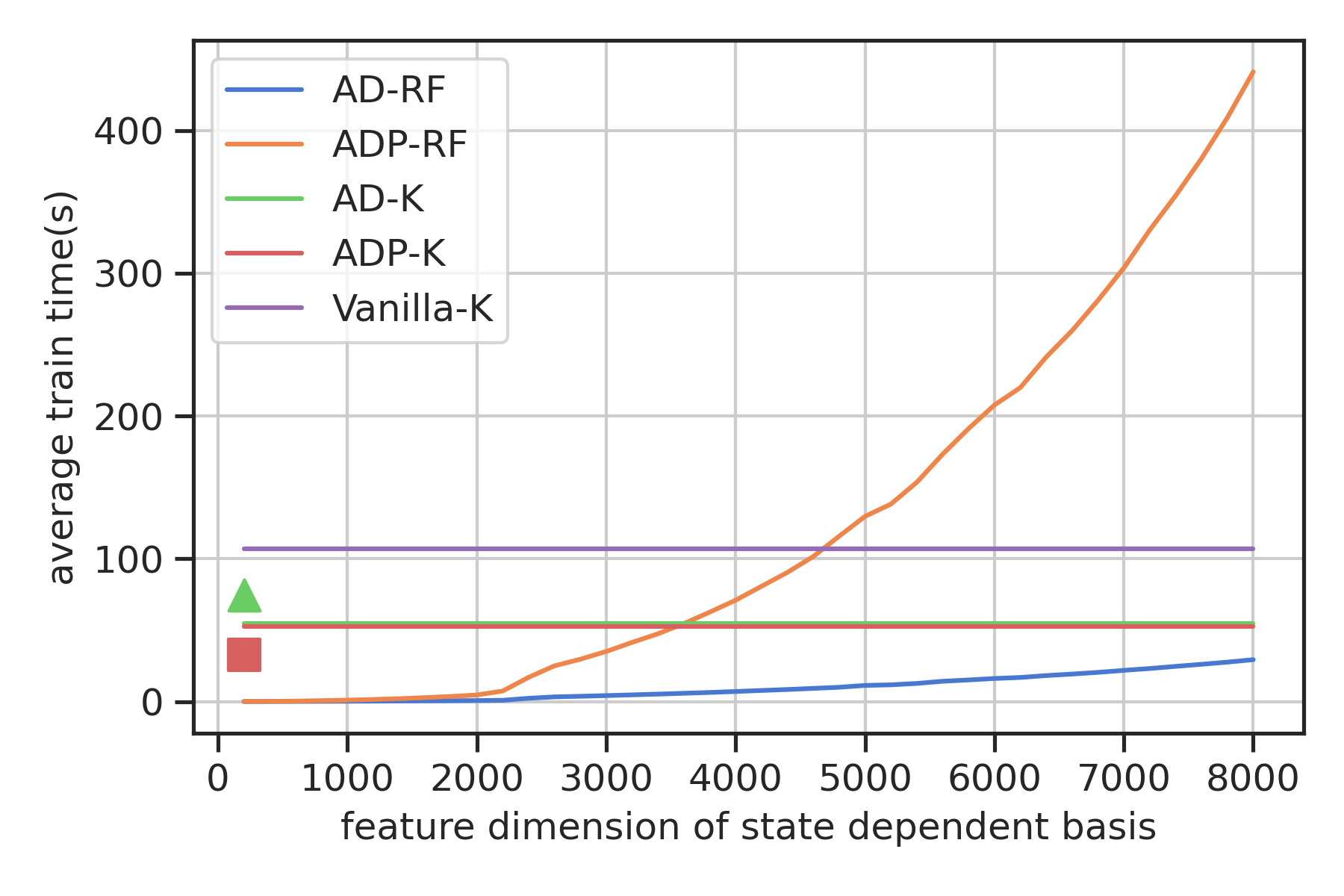}
	\end{centering}
	\caption{Evaluation of models, comparing prediction accuracy on test data (left) and training time on 8859 points (right) for a prediction problem on a double pendulum system. 
		Horizontal lines and markers correspond to kernel methods.
		Random features are sampled 10 times at varying dimensions; the left panel displays median and quartiles over the trials while the right panel shows the mean.
		Increasing the feature dimension of each state-dependent basis $\vpsi_i(x)$ results in lower RMSE but longer training time, especially for ADP-RF.}\vspace{-20pt}
	
	\label{fig:prediction}
\end{figure} 

\vspace{-5pt}
\subsection{Numerical demonstration}\label{sec:pred-exp}
In this section, we empirically\footnote{Code is available at \url{ https:github.com/kimzemian/swift_affine_mastery}} study the performance of the the two random features methods (ADP-RF and AD-RF) as well as the corresponding kernel methods (ADP-K and AD-K).
We focus on performance in terms of prediction accuracy.
In the next section, we also demonstrate the utility of these models for data-driven control.

We consider a prediction task relating to a double pendulum with actuation at both joints.
The state of this system $\vx\!\in\!\mathbb R^4$ consists of two angle variables and two angular velocities, while the control input $\vu\!\in\!\mathbb R^2$ consists of the two joint actuation torques.
In appendix \ref{sec:dipderivation}, we present a full derivation of the dynamics equation, which is affine in the control inputs.
We simulate the system under closed-loop control and sample at 10 Hz.
The controller is imperfectly designed to bring the system to an upright and balanced configuration; 
Further controller details are deferred to the following section.
We collect a dataset containing 226 trajectories, each starting at a different initial point and lasting 5 seconds.
The dataset if of the form 
$\{\{(\vx^e_i,\vu^e_i), z^e_i\}_{i=1}^L\}_{e=1}^E$
where $z_i$ is the time derivative of a scalar function of the state
(Example~\ref{example_03}); details are described in the following section.
We split the data into train and evaluation subsets with an $80/20$ split, so the train size is 8859 and test size is 2215, formulating a prediction task of the form in Definition~\ref{def:mainprob}.

We compare the performance of five models: three kernel methods (Vanilla-K, ADP-K, AD-K) and two random features methods (ADP-RF, AD-RF). 
Vanilla-K is an RBF kernel (Example~\ref{ex:rff}) that operates on the concatenated state and input without any affine structure.
ADP-K and AD-K (Definitions~\ref{def:ADP_kernel} and~\ref{def:AD_kernel}) use RBF kernel on the state variable, whereas, ADP-RF and AD-RF are as defined in Theorems~\ref{thrm:adp_approx} and~\ref{thrm:Affine-dense rf} with the corresponding random Fourier bases~\eqref{eq:rfx} as in Example~\ref{ex:rff}.  
For all models, $\gamma\!=\!1$ and $\lambda\!=\!1$.
Figure~\ref{fig:prediction} plots the performance in terms of test accuracy and training time.
The left panel shows median and quartile RMSE on the evaluation split
and the right panel shows the average training time.
The kernels are represented by horizontal lines and markers.
Vanilla-K performs worse than the affine kernels since it does not capture the affine structure, while AD-K and ADP-K have similar performance.
For the RF models, we examine the effect of the random features approximation of state-dependent samples $\vpsi_i(x)$ of dimension $D$.
ADP-RF has lower error for smaller feature dimension, but both RF methods quickly approach the performance of the kernel methods.
For small $D$, the RF methods are both much faster.
Train time increases with $D$ more quickly for ADP-RF than AD-RF, as training ADP-RF scales quadratically with $m\!+\!1$.
\textcolor{black}{Comparing the RF models, Figure~\ref{fig:prediction} suggests that, although training with AD-RF is faster as compared to ADP-RF, the later has smaller RMSE. 
	We attribute this to the higher dimensionality of the ADP compound basis, which allows for greater expressivity.
	In Appendix \ref{sec:rf_comparison}, we present extensive experiments with synthetic data demonstrating the relationship between training time and RMSE.
	These show that for fixed training time, AD-RF outperforms ADP-RF on accuracy when $D$ is sufficiently large, and this performance advantage grows as input dimension $m$ increases.}


\section{Case Study: Certificate Function Control}\label{sec:case_study_CFC}
\label{sec:ccf_control}

A key motivation for our work is that the affine structure of our data-driven models is amenable for use in control tasks.
We therefore describe how to incorporate these models into a particular approach to nonlinear control.
We then evaluate closed loop performance of our models.

\paragraph{Background} As a case study, we demonstrate a nonlinear control technique based on \emph{control certificate functions} as proposed by \cite{ccf}, which we refer to for a more rigorous and precise introduction. This approach generalizes and unifies the use of control Lyapunov functions (CLFs) to guarantee stability~\citep{galloway2015torque} and control barrier functions (CBFs) to guarantee safety~\citep{ames2016control}.
Certificate function control requires a continuously differentiable \emph{certificate function} $C:\mathbb{R}^n \to \mathbb{R}$ satisfying certain properties \citep{ccf}, along with a \emph{comparison function} $\alpha:\R\to\R_+$.
Then, an optimization-based state feedback controller
can be defined which will guarantee desired properties such as stability or safety by construction~\citep{ames2019control}.
Given some ``desired'' control input $\vu_d(\vx)$,
the CCF quadratic program (QP) is: \
\begin{align*}
	\tag{\textbf{CCF-QP}} \label{eq:CCF-QP}
	\vu^{*}(\vx) = ~&\underset{ \vu \in \mathbb{R}^m}{\arg\min} \norm{\vu}_2^2 + c_1\norm{\vu - \vu_d(\vx)}_2^2\\
	&\text{ s.t. }\,\underbrace{\nabla C(\vx)^\top (\vf(\vx) + \vg(\vx)\vu)}_{\dot C(\vx,\vu)}+ \alpha(C(\vx)) \leq 0.
\end{align*}



\paragraph{Data-driven Control}
We now suppose that the dynamics $\vf$ and $\vg$ are unknown, so the
CCF-QP controller cannot be directly implemented.
We assume that a valid CCF $C$ and comparison function $\alpha$ for the unknown true system is given\footnote{This assumption is met for feedback linearizable systems as long as the the degree of actuation of the true dynamics model is known~\citep{taylor2019episodic}. For example, many robotic systems satisfy this assumption.}.
Given a sampled trajectory $\{(\vx_i,\vu_i)\}_{i=1}^N$,
we construct a
control affine modelling problem for $\dot C\!:\!\mathcal X\!\times\!\mathcal U\!\to\!\R$ as described in Example~\ref{example_03}.
We therefore use methods discussed in Section~\ref{sec:RF_for_CAM} to create a control-affine model $\hat h(\vx, \vu)$
which can be used in place of the unknown $\dot C(\vx,\vu)$ function in~\eqref{eq:CCF-QP} in a \emph{certainty equivalent} (CE) manner.
Because the model $\hat h$ is affine in $\vu$, the resulting optimization problem is still a QP.
In Appendix \ref{sec:ccf_modeling},
we provide additional details on constructing this QP for both kernel and RF methods.
We also discuss methods for \emph{robust}, rather than CE, data-driven control. 
The robust approach requires estimates of 
uncertainty (e.g. as in Gaussian process regression) as well as the pointwise  RF approximation errors,
and results in a second order cone program (SOCP), see Appendix \ref{sec:robust_ccf}.




\paragraph{Simulation experiments}
We simulate data-driven CCF control of the double pendulum introduced in Section~\ref{sec:pred-exp},
where the goal is to swing up and balance in the upright position $\vx\!=\!0$ with only an incorrect model of the dynamics $\tilde{\vf},\tilde{\vg}$.
Knowing only its degree of actuation,
we may conclude that the dynamics are \emph{feedback linearizable} and therefore
we can define a Control Lyapunov Function (CLF) without the exact dynamics model~\citep{taylor2019episodic}.
Specifically, we define $C(\vx) =\vx^\top P \vx$, $\alpha(\vx)=.725\vx$,
$\vu_d(\vx)$ a feedback linearizing controller for the incorrect $\tilde{\vf},\tilde{\vg}$, and $c_1=25$. 
Full details are provided in Appendices \ref{sec:data_collection}, \ref{sec:closed_loop}.

We first define a ``nominal'' QP controller which selects inputs according to~\eqref{eq:CCF-QP} with the nominal model $\tilde{\vf},\tilde{\vg}$. 
We use this controller to gather trajectories and define a dataset as described in Section~\ref{sec:pred-exp}.
We subsample the data by $1/5$
and derive data-driven models of $\dot C(\vx,\vu)$ as outlined in the paragraph above.
We consider four data-driven QP controllers using the 
four affine models: AD-K, ADP-K, AD-RF, and ADP-RF. 
For the data-driven controllers, we augment the initial dataset with episodic data collection: we run the controller for 10 seconds at 10 Hz, retrain, and repeat for ten episodes.
The RF dimension is $D=N/5$ for $N$ the size of the training data.
Finally, we compare the performance of the the nominal and data-driven methods with an ``oracle'' controller that solves~\eqref{eq:CCF-QP} with the true dynamics. 
Figure~\ref{fig:closed_loop} plots the system trajectory in terms of the Lyapunov function $C(\vx)$ and the pendulum configuration.
While the nominal controller fails to balance the pendulum, the data-driven controllers succeed and are similar to each other.

\begin{figure}[t!]
	\centering
	\includegraphics[width=0.45\linewidth]{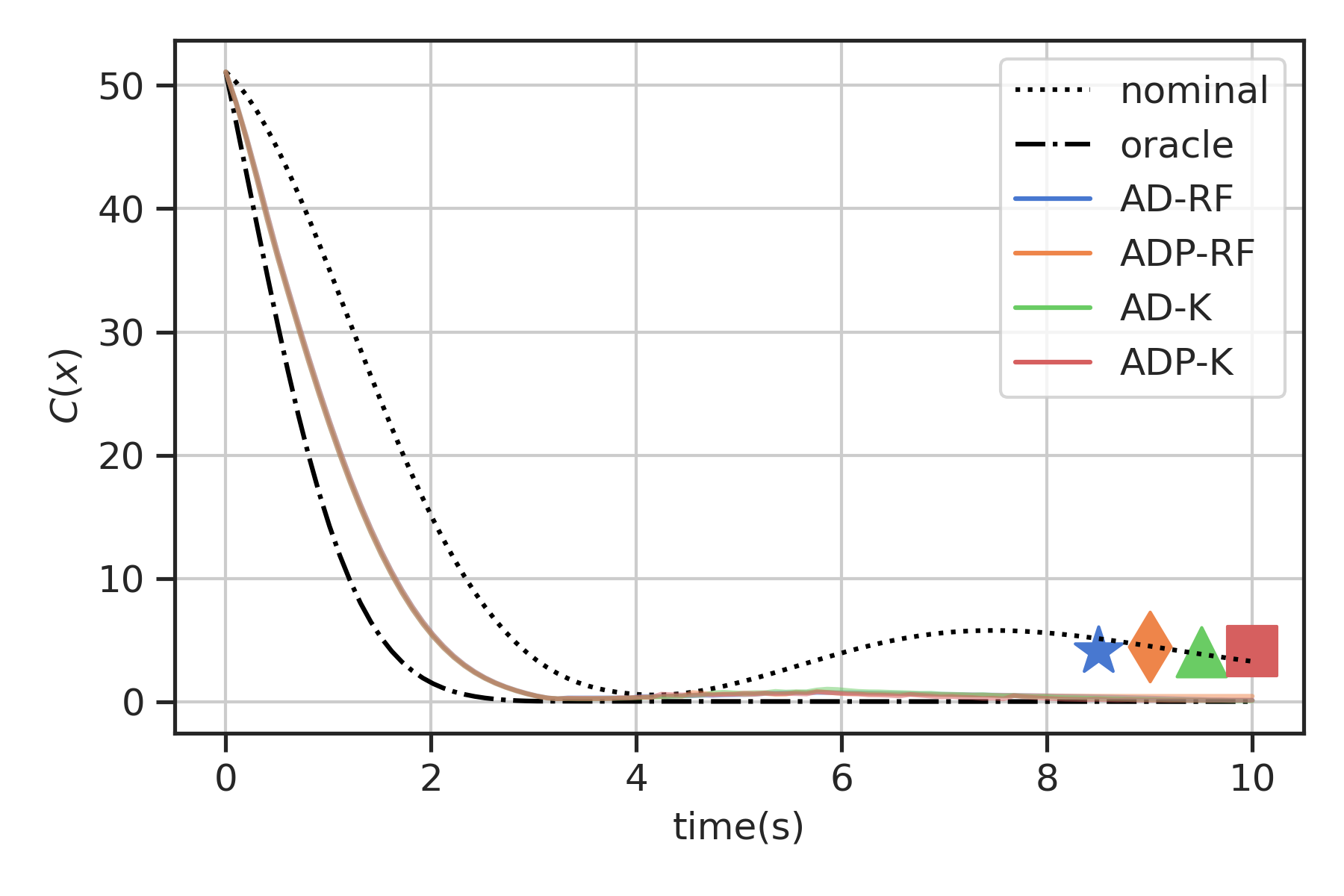}
	\includegraphics[width=0.52\linewidth]{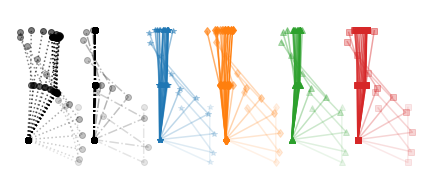}
	\caption{Left: The value of the Lyapunov function $C(\vx)$ over time for nominal, oracle, and data-driven controllers with initial state $[2,0,0,0]$. Right: Illustration of the pendulum configurations over time. Nominal fails to balance the pendulum; data-driven methods succeed.}
	\vspace{-20pt}
	\label{fig:closed_loop}
\end{figure}


\section{Conclusion}\label{sec:conc}
This work considers a control affine modelling problem and proposes two classes of random basis functions as a solution: ADP and AD.
The representation guarantees of these methods are made formal by connection to kernel regression in corresponding RKHSs.
A case study in nonlinear control with CCF illustrates the utility of control affine models.
Numerical experiments demonstrate the
performance of the RF and kernel methods in terms of accuracy, computation time, and closed loop control performance.
In Appendix \ref{sec:errorbounds}, we additionally present
uncertainty estimates analogous to Gaussian process (GP) regression, as well as a corresponding robust data-driven control.
We highlight that the approximation methods that we propose may be broadly of interest for any control application which makes use of GPs.

Our work opens the door to many future questions of interest.
It would be interesting to develop kernels and random features tailored to particular control applications.
One could explore the application of our methods to additional control techniques, like feedback linearization or model predictive control.
It would be interesting to develop principled techniques for acquiring data, expanding from the simple warm start episodic approach that we used.
Furthermore, additional methods for approximating kernels would provide alternatives to speeding up kernel and GP regression for data-driven control.
\acks{We thank Jason Choi for helping us understand their code base for ADP kernel. This work was partly funded by NSF CCF 2312774 and NSF OAC-2311521, a LinkedIn Research Award, and a
gift from Wayfair.}
\bibliography{citation}
\newpage
\appendix

\section{Omitted Proofs}\label{sec:proofs}
\subsection{Proof of Theorem~\ref{thrm:adp_approx}}
First note that for the ADP basis, \[\vphi_c(\vx,\vu)^\top \vphi_c(\vx',\vu') =  \begin{bmatrix}\vu^\top & 1\end{bmatrix} \diag(\vpsi_1(\vx)^\top \vpsi_1(\vx'), \cdots, \vpsi_{m+1}(\vx)^\top \vpsi_{m+1}(\vx')) \begin{bmatrix}\vu'^\top & 1\end{bmatrix}^\top.\]
The result holds because by assumption, the expectation of $\vpsi_i(\vx)^\top \vpsi_i(\vx')$ equals $k_i(\vx-\vx')$.

\subsection{Proof of Theorem~\ref{thm:ad-kernel}}
Consider the following claim:
if $k(\vx,\vx')$ is a normalized shift invariant reproducing kernel, then $k(\vx,\vx')-k(\vx)k(\vx')$ is also a reproducing kernel.

For now we take the claim as true.
Then notice that the AD kernel can also be written as
\[k_d((\vx,\vu),(\vx',\vu'))=
\begin{bmatrix}\vu^\top & 1\end{bmatrix}\tilde{\vD}(\vx,\vx')\begin{bmatrix}\vu^\top & 1\end{bmatrix}^\top +\begin{bmatrix}\vu^\top & 1\end{bmatrix}\tilde{\vA}(\vx,\vx')\begin{bmatrix}\vu^\top & 1\end{bmatrix}^\top, \]
for $\tilde{\vD}(\vx,\vx')$ a diagonal matrix with $i^{th}$ entry as $k_i(\vx-\vx')-k_i(\vx)k_i(\vx')$, and $\tilde{\vA}(\vx,\vx')$ a matrix whose entry at $i,j$ is $k_i(\vx)k_j(\vx')$  for $i, j\in[m+1]$. 
Since sums of kernels are also kernels, 
it suffices to show that each term is a kernel.
If the claim above is true, then the first term is a special case of the ADP kernel, and is therefore a reproducing kernel by Lemma 3 of \citet{contro1}.
The second term can be directly written as an inner product
$\begin{bmatrix}\vu^\top & 1\end{bmatrix} \tilde{\vA}(\vx,\vx')\begin{bmatrix}\vu^\top & 1\end{bmatrix}^\top=\begin{bmatrix}\vu^\top & 1\end{bmatrix}\vphi(\vx)\vphi(\vx')^\top\begin{bmatrix}\vu^\top & 1\end{bmatrix}^\top$ where $\vphi(\vx)=[k_1(\vx)~\dots~k_{m+1}(\vx)]^\top$.
Therefore, it is a kernel.

It now remains to prove the claim.
We proceed by showing that $k(\vx,\vx')-k(\vx)k(\vx')$ can be written as the inner product of some explicit feature representation.
Let $k(\vx,\vx')=\langle \vvarphi(\vx) , \vvarphi(\vx')\rangle$ for some feature function mapping to an arbitrary real Hilbert space $\mathcal H$, $\vvarphi:\mathcal X\to\mathcal H$. 
This must exist since $k$ is a reproducing kernel~\citep{berlinet2011reproducing}.
Then
\begin{align*}
	k(\vx,\vx')-k(\vx)k(\vx') &= \langle \vvarphi(\vx) , \vvarphi(\vx')\rangle-\langle \vvarphi(\vx) , \vvarphi(0)\rangle\langle \vvarphi(\vx') , \vvarphi(0)\rangle= \langle \vvarphi(\vx) , \left(\mathcal I - \mathcal P\right) \vvarphi(\vx') \rangle
\end{align*}
where $\mathcal I$ is identity operator and $\mathcal P:\mathcal H\to\mathcal H$ is defined as the linear operator $\mathcal P \vv = \vvarphi(0)\langle \vvarphi(0), \vv\rangle $ for $\vv \in \mathcal H$. 
We now argue that $\mathcal I - \mathcal P$ is a bounded self-adjoint positive operator; it is bounded, i.e. maps bounded subsets to bounded subsets, since $\mathcal I$ and $\mathcal P$(by Cauchy Schwarz) are bounded; it is positive, i.e., the quadratic form $\vv \mapsto \langle \vv , \left(\mathcal I - \mathcal P\right) \vv \rangle$ is positive semi-definite:
\[\langle \vv , \left(\mathcal I - \mathcal P\right) \vv \rangle = \|\vv\|^2_{\mathcal H} - \langle \vvarphi(0), \vv\rangle ^2 \geq  \|\vv\|^2_{\mathcal H} - (\|\vvarphi(0)\|_{\mathcal H}\|\vv\|_{\mathcal H})^2 =0. \]
The first inequality holds by Cauchy Schwarz, and the final equality holds because the kernel is normalized, i.e., $k(0)=\|\vvarphi(0)\|_{\mathcal H}^2=1$. 
It is self-adjoint: 
\begin{align*}
	\langle \vv , \left(\mathcal I - \mathcal P\right) \vw \rangle &= \langle \vv, \vw \rangle  -   \langle \vv,\vvarphi(0) \rangle \langle  \vvarphi(0), \vw \rangle, \\
	&= \langle \mathcal{I}\vv, \vw \rangle - \langle \vvarphi(0)\langle \vvarphi(0),\vv \rangle , \vw \rangle
	=\langle \left(\mathcal I - \mathcal P\right) \vv , \vw \rangle.
\end{align*}
Therefore by theorem 13.15 in \citet{functionalanalysis}, there exists a unique positive linear operator $\mathcal L$ such that
$\mathcal I-\mathcal P = \mathcal L^2$, and therefore
\[k(\vx,\vx')-k(\vx)k(\vx')=\langle \vvarphi(\vx) , \left(\mathcal I - \mathcal P\right) \vvarphi(\vx') \rangle = \langle \mathcal L\vvarphi(\vx) , \mathcal L\vvarphi(\vx') \rangle.\]
Therefore, we have constructed an explicit feature representation which proves the claim.

\subsection{Proof of Theorem~\ref{thrm:Affine-dense rf}}
First note that by assumption, the expectation of $\vpsi_i(\vx)^\top \vpsi_i(\vx')$ is equal 
$k_i(\vx-\vx')$. Thus it remains to show that $\vpsi_i(\vx)^\top \vpsi_j(\vx')$ estimates $k_i(\vx)k_j(\vx')$ when $i \neq j$. Recall that  by assumption, $\vpsi_i$ is constructed using i.i.d. samples from the inverse Fourier transform of the kernel $k_i$, i.e., $p_i(\vvartheta)$. Define $\zeta_{\vvartheta}(\vx)=e^{i\vvartheta^\top \vx}$. Then the expectation is given by,
\begin{align*}
	\mathbb{E}_{\vvartheta,\vvartheta'}[\zeta_{\vvartheta}(\vx)\overline{\zeta_{\vvartheta'}(\vx')}] &= \int_{\vvartheta,\vvartheta'\in\mathbb{R}^n} p_i(\vvartheta)p_j(\vvartheta')e^{i\vvartheta^\top \vx}e^{-i\vvartheta'^\top \vx'}d\vvartheta d\vvartheta', \\
	&=\int_{\vvartheta\in\mathbb{R}^n}p_i(\vvartheta)e^{i\vvartheta^\top \vx}d\vvartheta 
	\int_{\vvartheta'\in\mathbb{R}^n}p_j(\vvartheta')e^{-i\vvartheta'^\top \vx'}d\vvartheta', \nonumber\\
	&=\mathbb{E}_{\vvartheta}[\zeta_{\vvartheta}(\vx)]\mathbb{E}_{\vvartheta'}[\overline{\zeta_{\vvartheta'}(\vx')}]= k_i(\vx)k_j(\vx'). 
\end{align*}

\subsection{Proof of Proposition~\ref{prop:kernelapprox}}
The proof of this proposition is given in the first two steps of the proof of Proposition~\ref{proposition_one}.
\section{Gaussian Process Regression}\label{sec:gp-appendix}

An important advantage of kernel methods is that they are
amenable to theoretical guarantees and uncertainty characterization. 
Gaussian Process (GP) regression~\citep{gpbook} takes a Bayesian perspective,
and provides posterior mean and variance estimates on function values. 
Confidence intervals of this form can be derived for kernel predictions even in frequentist settings~\citep{GPtheorem}. 
Prior works have developed robust CCF-based controllers for unknown models by incorporating 
Gaussian process (GP) regression~\citep{contro1,adp}. 
We thus describe how to use our approximation methods for GP regression.
These results may be broadly of interest for any controller which makes use of GPs~\citep{ellipsoid, Learningbased_nonlinearMPC, NMPC,cautiousMPC,stochasticMPC}.

\subsection{Background}\label{sec:gp_background}
This section presents an overview of regression methods which explicitly model the uncertainty.
Bayesian linear regression is a probabilistic approach to regression analysis that models the relationship between a set of input vectors $\{\vs_i\}_{i=1}^N \in \mathcal{S} \subseteq \R^d$ and target output variables $\{z_i\}_{i=1}^N\in \mathcal{Z} \subseteq \R$ as
\begin{align*}
	z_i=h(\vs_i)+\epsilon_i, \quad h(\vs_i)=\vs_i^\top \vw,
\end{align*}  
where $\vw \in \R^d$ represents the linear model and $\{\epsilon_i\}_{i=1}^N \distas \mathcal{N}(0,\sigma^2_N)$ are the noise. In contrast to classical linear regression, Bayesian linear regression~(BLR) not only estimates the parameters of a linear model, but also provides a probabilistic interpretation of the model's uncertainty. More specifically, BLR treats the regression coefficients as random variables with a prior distribution, and computes the posterior distribution over these coefficients given $N$ input-output data pairs $\{(\vs_i, z_i)\}_{i=1}^N$.

The Bayesian linear model suffers from limited expressiveness. A simple approach to overcome this problem is to map the input vectors $\{\vs_i\}_{i=1}^N$ to a higher-dimensional feature space, where a linear relationship can be established more easily. If such a map exists, we call it a basis function.
Specifically, let $\vphi:\R^d\to\R^D$ maps an input vector $\vs \in \R^d$ to a feature vector $\vphi(\vs) \in \R^D$. Now the model becomes $h(\vs) = \vphi(\vs)^\top \vw$. Further, let the matrix $\Phi\in\R^{N\times D}$ and the vector $\vz \in \R^N$ be the aggregation of rows $\{\vphi(\vs_i)^\top\}_{i=1}^N$ and $\{z_i\}_{i=1}^N$, respectively. Assuming a Gaussian prior on the model weights, i.e., $\vw \sim \mathcal{N}(0,\vSigma_w)$, the posterior distribution of $h(\vs_*)$ at a query point $\vs_*$ is $h(\vs_*) \sim \mathcal{N}(\mu_*, \sigma_*^2)$ with,
\begin{equation}
	\begin{aligned}\label{posterier} 
		& \mu_* = \vphi_*^\top\vSigma_w\Phi^\top(\Phi\vSigma_w\Phi^\top+\sigma^2_N \vI_N)^{-1}\vz, \\
		& \sigma_*^2 = \vphi_*^\top\vSigma_w\vphi_*-\vphi_*^\top\vSigma_w\Phi^\top(\Phi\vSigma_w\Phi^\top+\sigma^2_N \vI_N)^{-1}\Phi\vSigma_w\vphi_*,
	\end{aligned}
\end{equation}
where we used the shorthand $\vphi_* = \vphi(\vs_*), \mu_* = \mu(\vs_*)$ and $\sigma_* = \sigma(\vs_*)$~\citep{gpbook}. 
This distributional perspective characterizes the uncertainty in the model prediction. 
One simple way to express the uncertainty is through a confidence interval. 
For Gaussian distributions, confidence intervals take the form,
\begin{align*}
	| \mu(\vs_*) - h(\vs_*) | \leq \beta \sigma(\vs_*),
\end{align*}
where $\beta\geq 0$ depends on the level of confidence.
Below in Theorem~\ref{thrm:GPtheorem}, we present a formal version of this confidence bound that holds even in the frequentist setting, meaning that it does not depend on the distributional assumptions or Bayesian priors.
However, the validity of the confidence interval does depend on the choice of basis functions, as this determines the complexity and richness of the modelled relationship between inputs $\vs$ and outputs $z$.
For data coming from a highly structured process, it may be reasonable to specify a basis of small dimension.
However, in general a suitable compact basis may not be known a priori.

Kernel methods are used to allow for expressive basis functions of arbitrarily high or infinite dimension.
Using the \emph{kernel trick}~\citep{scholkopf2018learning,muller2018introduction}, the posterior \eqref{posterier} can be computed using only inner products of basis functions.
A kernel function $k:\R^d\times \R^d\to \R$  generalizes the idea of inner products between basis functions. 
Using kernel functions in this context leads to the familiar Gaussian Process (GP) regression~\citep{gpbook}. 
GPs are commonly used as a nonparametric approach for representing complex functions.
GP regression corresponds to Bayesian regression in Reproducing Kernel Hilbert Spaces (RKHS), a specific class of function space denoted as $\mathcal{H}_k(\mathcal{S})$~\citep{wendland2004scattered}. The RKHS is equipped with the norm $\|h(\vs)\|_k := \sqrt{\li h(\vs), h(\vs) \ri}$ and is dense in the set of continuous functions, meaning that any continuous function can be represented arbitrarily well by kernel regression.

In the RKHS setting, \cite{GPtheorem} establishes the following frequentist confidence interval.
\begin{theorem}[\cite{GPtheorem}] \label{thrm:GPtheorem}
	Assume that the noise sequence $\{\epsilon_i\}_{i=1}^{\infty}$ is zero mean and uniformly bounded by $\sigma_N$. Let the target function $h: \mathcal{S} \to \R$ be a member of $\mathcal{H}_k(\mathcal{S})$ associated with a bounded kernel $k$, with its RKHS norm bounded by $B$. Then, with probability at least $1-\delta$, the following holds for all $\vs \in \mathcal{S}$ and $N \geq 1$:
	\begin{align*}
		| \mu(\vs_*) - h(\vs_*) | \leq \sqrt{2B^2 + 300 \gamma_{N+1} + \ln^3(\frac{N+1}{\delta})} \sigma(\vs_*),
	\end{align*}
	where $\gamma_{N+1}$ is the maximum information gain after getting $N+1$ data points.
\end{theorem}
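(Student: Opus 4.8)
The plan is to prove Theorem~\ref{thrm:GPtheorem} by reducing it to the self-normalized concentration bounds that underlie the kernelized bandit literature, essentially reconstructing the argument of \citet{GPtheorem} (a refinement of the classical result of \citet{srinivas2010gaussian} and \citet{abbasi2011improved}). First I would cast the problem in the RKHS feature space: writing the target as $h(\vs)=\langle h,\varphi(\vs)\rangle_{\mathcal H_k}$ with $\|h\|_k\leq B$, the posterior mean $\mu(\vs_*)$ and variance $\sigma^2(\vs_*)$ from~\eqref{posterier} can be expressed purely in terms of kernel evaluations via the kernel trick, with $\sigma_N^2$ playing the role of the ridge parameter. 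The prediction error then splits as
\begin{align*}
	\mu(\vs_*) - h(\vs_*) = \underbrace{\big(\mu(\vs_*) - \bar h(\vs_*)\big)}_{\text{noise term}} + \underbrace{\big(\bar h(\vs_*) - h(\vs_*)\big)}_{\text{bias/regularization term}},
\end{align*}
where $\bar h$ denotes the noiseless ridge estimate. The bias term is controlled deterministically by $B$ and $\sigma(\vs_*)$ using the fact that the regularized projection contracts the RKHS norm.

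For the noise term, the key step is a self-normalized martingale concentration inequality: the noise sequence $\{\epsilon_i\}$ is zero-mean, conditionally bounded by $\sigma_N$ (hence conditionally sub-Gaussian), so the vector-valued martingale $S_N=\sum_{i=1}^N \epsilon_i\varphi(\vs_i)$ obeys a bound of the form $\|S_N\|_{V_N^{-1}}^2 \lesssim \log(\det(V_N)/(\sigma_N^{2N}\delta))$ with probability $1-\delta$, where $V_N = \sigma_N^2 I + \sum_i \varphi(\vs_i)\varphi(\vs_i)^\top$ (Theorem 1 of \citet{abbasi2011improved}, extended to Hilbert spaces / RKHS as in \citet{durand2018streaming}). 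The log-determinant factor is, by definition, exactly twice the maximum information gain $\gamma_{N+1}$ (up to the normalization conventions), which is where that quantity enters. Combining this with the identity $\mu(\vs_*)-\bar h(\vs_*) = \langle S_N, V_N^{-1}\varphi(\vs_*)\rangle$ and Cauchy--Schwarz in the $V_N$-weighted inner product yields $|\mu(\vs_*)-\bar h(\vs_*)| \leq \|S_N\|_{V_N^{-1}}\sigma(\vs_*)$, since $\sigma^2(\vs_*) = \varphi(\vs_*)^\top V_N^{-1}\varphi(\vs_*)$. Adding the two bounds and absorbing constants gives the stated $\beta = \sqrt{2B^2 + 300\gamma_{N+1} + \ln^3((N+1)/\delta)}$; the precise numerical constants ($300$, the cube on the log) come from carefully tracking the sub-Gaussian tail and a union bound over a discretization of the time horizon, following \citet{GPtheorem} verbatim.

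The main obstacle is handling the infinite-dimensionality of the RKHS rigorously: the self-normalized bound must be lifted from $\mathbb R^d$ to a general (possibly infinite-dimensional, non-separable-looking) feature space, which requires either working with a truncated feature map and passing to the limit, or invoking the operator-theoretic version of the martingale inequality; one also needs the information gain $\gamma_{N+1}$ to be finite, which holds for bounded kernels. A secondary subtlety is that the bound must hold uniformly over all $\vs_* \in \mathcal S$ and all $N\geq 1$ simultaneously — the uniformity in $\vs_*$ is free once we have the $V_N$-norm bound (it holds for every $\vs_*$ deterministically given the single high-probability event on $S_N$), but uniformity in $N$ requires a union bound with a $\delta/N^2$-type allocation, which is precisely what produces the $\ln^3$ rather than $\ln$ dependence. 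Since this is exactly the theorem statement and proof of \citet{GPtheorem}, in the paper I would simply cite that reference and sketch only the RKHS embedding and the two-term decomposition above.
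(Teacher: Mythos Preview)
The paper does not prove this theorem at all: it is stated in Appendix~\ref{sec:gp_background} as a result quoted directly from \citet{GPtheorem}, and the appendix of omitted proofs (Section~\ref{sec:proofs}) contains no argument for it. So there is nothing to compare your proposal against in the paper itself.

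That said, your sketch is a faithful reconstruction of the standard argument behind this bound (self-normalized martingale concentration in the RKHS feature space, bias/noise decomposition, identification of the log-determinant with the information gain, and a time-union bound yielding the $\ln^3$ term). Your final sentence already arrives at the right conclusion: in this paper the appropriate ``proof'' is simply to cite \citet{GPtheorem}, which is exactly what the authors do.
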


The drawback of kernel methods is computation. Algorithms for fitting functions in an RKHS to data have superlinear complexity in the number of data points. In particular, computing the kernel approximator can be prohibitively expensive for large datasets. Solving \eqref{posterier} generally requires $O(N^3)$ time and $O(N^2)$ memory.

\subsection{Affine posterior}\label{affinenessinu}
In this section, we 
explicitly show the affineness of predicted $\mu$ and $\sigma$ in $\vu$ as products of input-dependent and independent parts. We will use these calculation in the case study \ref{sec:BLR-SOCP} to control an acrobat using CCF functions. Throughout, we define $\vy := [\vu^\top~~1]^\top$.

\paragraph{Kernel methods}
Under BLR assumptions, given a set of finite measurements of features and labels of the form $\{(\vs_i,z_i)\}_{i=1}^{N}$, where $z_i = h(\vs_i)+\epsilon_i$, and $\epsilon_i \sim \mathcal{N}(0, \lambda_{N}^2)$, a posterior distribution of $h(\vs)$ at a query point $\vs := (\vx, \vu)$ can be derived as follows: $h(\vx, \vu) \sim \mathcal{N}(\mu_{x}(\vu),\sigma_{x}(\vu)^{2})$ with 
\begin{align}\label{train_kernel}
	\mu_{x}(\vu) &:= \mu(\vx, \vu) = {\vz}^\top (\vK + \lambda_{N}^2 \vI_N )^{-1} \vk_{(x,u)}, \\
	\sigma_{x}(\vu)^{2} &:= \sigma(\vx, \vu)^2 = k\left((\vx,\vu), (\vx,\vu)\right)-\vk_{(x,u)} ^\top (\vK + \lambda_{N}^2 \vI_N )^{-1} \vk_{(x,u)},
\end{align}
where $\vK \in \mathbb{R}^{N\times N}$ is the Gram matrix whose entry at $i,j$ is given by $ [\vK]_{i,j} = k((\vx_i,\vu_i),(\vx_j,\vu_j))$ for $i,j \in [N]$. Further, $\vk_{(x,u)}=[k((\vx,\vu), (\vx_1,\vu_1))~~ \cdots~~k((\vx,\vu), (\vx_N,\vu_N))]^\top\in\mathbb{R}^N$, and ${\vz}\in \mathbb{R}^N$ is the vector containing the output measurements $z_i=h(s_i)+\epsilon_i$ for $i \in [N]$.

In the following, we will show the affineness of $\mu_{x}(\vu)$ and $\sigma_{x}(\vu)^{2}$, when $k((\vx,\vu),(\vx',\vu'))$ is an AD kernel~(Definition~\ref{def:AD_kernel}). We refer to \citet{adp} section 5 for the case of ADP kernel~(Definition~\ref{def:ADP_kernel}). 
To proceed let $\mathcal{X} = \{\vx_1,\dots,\vx_N\}$ and $\mathcal{Y}= \{\vy_1,\dots,\vy_N\}$ be the two sets containing the training data, where $\vy_i = [\vu^\top~~1]^\top$. Let $\vA(\vx,\vx')$ and $\vD(\vx,\vx')$ be as in Definition~\ref{def:AD_kernel}, and set $\vM(\vx,\vx') := \vD(\vx,\vx') + \vA(\vx,\vx')$. Further, define 
\[\vk_{train} = \mathrm{blkdiag}(\vy_1^\top,\dots,\vy_N^\top)[\vM(\vx_1,\vx)^\top~~\dots~~ \vM(\vx_N,\vx)^\top]^\top.\]
It's immediate that $\vk_{(x,u)}= \vk_{train} \vy$. Therefore,
\begin{align*}
	\mu_x(\vu) &= \underbrace{{\vz}^\top (\vK + \lambda_{N}^2 \vI_N )^{-1} \vk_{train}}_{=:\vXi_{ADK}(\mathcal{X},\mathcal{Y})} \vy, \\
	&= \vXi_{ADK}(\mathcal{X},\mathcal{Y}) \begin{bmatrix} \vu \\ 1\end{bmatrix}, \\
	\sigma_x(\vu)^{2} &= \vy^\top \vM(\vx,\vx)\vy - \vy^\top \vk_{train}^\top (\vK + \lambda_{N}^2 \vI_N )^{-1} \vk_{train} \vy,\\
	&= \tr{\vy} \underbrace{\big[\vM(\vx,\vx)-\vk_{train}^\top (\vK + \lambda_{N}^2 \vI_N )^{-1} \vk_{train}\big]}_{=: \vG_{ADK}(\mathcal{X},\mathcal{Y})} \vy, \\
	&= [\tr{\vu}~~1] \vG_{ADK}(\mathcal{X},\mathcal{Y}) \begin{bmatrix} \vu \\ 1\end{bmatrix}.
\end{align*}
Since every nonzero real vector can be scaled to have $1$ as the last entry, and from above $\sigma_x(\vu)^{2}=[\tr{\vu}~~1] \vG_{ADK} (\mathcal{X},\mathcal{Y})\begin{bmatrix} \vu \\ 1\end{bmatrix}$ for every $\vu$, $\vG_{ADK}(\mathcal{X},\mathcal{Y})$ is positive semi-definite.
\begin{align*}
	\implies \sigma_x(\vu)&= \norm{ \vOmega_{ADK}(\mathcal{X},\mathcal{Y})\begin{bmatrix} \vu \\ 1 \end{bmatrix}}_2.
\end{align*}

\paragraph{Random basis methods} Recall that using random features, the posterior mean and covariance can be approximated by
\begin{align}\label{train_rf}
	\hat{\mu}_x(\vu)= \vvarphi(\vx,\vu)^\top (\Phi^\top \Phi+\lambda_N \vI_D)^{-1}\Phi^\top \vz, \\ 
	\hat{\sigma}_x(\vu)^2=\lambda_N\vvarphi(\vx,\vu)^\top(\Phi^\top \Phi+\lambda_N \vI_D)^{-1 }\vvarphi(\vx,\vu).
\end{align}
ADP random features: From Definition \ref{def:adpb}, we know $\vvarphi(\vx,\vu) = \mathrm{blkdiag}(\vpsi_1(\vx),\dots,\vpsi_{m+1}(\vx))\vy$. Define $\vPsi_{ADP}(\vx):=\mathrm{blkdiag}(\vpsi_1(\vx),...,\vpsi_{m+1}(\vx))$. As a result
\begin{align*}
	\hat{\mu}_x(\vu)&=\tr{\vy} \underbrace{\tr{\vPsi(\vx)} (\Phi^\top \Phi+\lambda_N \vI_D)^{-1}\Phi^\top \vz}_{\vXi_{ADRF}(\mathcal{X},\mathcal{Y})}, \\
	&= [\tr{\vu}~~1] \vXi_{ADRF}(\mathcal{X},\mathcal{Y}),\\
	\hat{\sigma}_x(\vu)^2&= \tr{\vy} \underbrace{\lambda_N\tr{\vPsi(\vx)} (\Phi^\top \Phi+\lambda_N \vI_D)^{-1} \vPsi(\vx)}_{\vG_{ADPRF}(\mathcal{X},\mathcal{Y})} \vy,\\
	&=  [\tr{\vu}~~1] \lambda_N\vG_{ADPRF}(\mathcal{X},\mathcal{Y}) \begin{bmatrix} \vu \\ 1\end{bmatrix},\\
	\implies \hat{\sigma}_x(\vu)&= \norm{ \vOmega_{ADPRF}(\mathcal{X},\mathcal{Y})\begin{bmatrix} \vu \\ 1\end{bmatrix}}_2.
\end{align*}
\\
AD random features: From Definition~\ref{def:adb}, we know that $\vvarphi(\vx,\vy)=[\vpsi_1(\vx)~~\dots~~\vpsi_{m+1}(\vx)]\vy$. Define $\vPsi_{AD}(\vx):=[\vpsi_1(\vx)~~\dots~~\vpsi_{m+1}(\vx)]$. Then, similar to ADP random features, we have,
\begin{align*}
	\hat{\mu}_x(\vu)&= [\tr{\vu}~~1] \vXi_{ADRF}(\mathcal{X},\mathcal{Y}),\\
	\hat{\sigma}_x(\vu)^2&= [\tr{\vu}~~1]\lambda_N  \vG_{ADRF}(\mathcal{X},\mathcal{Y}) \begin{bmatrix} \vu \\ 1 \end{bmatrix},\\
	\implies \hat{\sigma}_x(\vu)&= \norm{ \vOmega_{ADRF}(\mathcal{X},\mathcal{Y})\begin{bmatrix} \vu \\ 1 \end{bmatrix}}_2.
\end{align*}

\subsection{Error bounds}\label{sec:errorbounds}

For the purposes of robust control, 
it is necessary to track how the approximation error accumulates in our computation of the posterior.

\cite{analysis} shows that with probability $1-\delta$, $\underset{\vx \in \mathcal X}{\sup}|\vpsi(\vx)^\top\vpsi(\vx)-k(\vx)|\leq \epsilon$ for $D\geq \frac{8(d+2\alpha_\epsilon)}{\epsilon^2}\left[\frac{2}{1+\frac{2}{d}}\log \frac{\sigma_p l}{\epsilon}+log\frac{\beta_d}{\delta}\right],$
where $l$ is the diameter of $\mathcal X$, $\sigma_p^2=\mathbb{E}_p\|\vomega\|^2$ and $\beta_d, \alpha_e$ are defined in Proposition 1 of \cite{analysis}.

Define $\vk_s \in \R^N$ to be a vector containing the kernel $k(\vs_i,\vs)$ for $i= 1,\dots,N$. 
Let $\vU \in \R^{N \times m}$ be a matrix with rows $\{\vu_i^\top\}_{i=1}^N$. We get the following error bound. 
\begin{proposition} \label{proposition_one}
	Assume each $i$-th element of $\vvarphi_C$ \eqref{Cdot-expand} is a member of $\mathcal{H}_{k_i}$ with bounded RKHS norm, for $i=1,\dots,m+1$. 
	Assume either the AD or ADP kernel with bounded kernels $k_i$ and that we have access to measurements $\vz$. 
	Assume $\|\vk_s\| \leq \sqrt{N} \kappa$ and that $\lambda_n=n\lambda$. 
	Let $\sigma_{max}$ be the max singular value of $\vU$. 
	Then with a probability of $1-(\delta_1+\delta_2)$ we have:
	\begin{equation}
		|\dot{C}(\vx,\vu)-\hat{\mu}_{\vx}(\vu)|
		\leq \beta\hat{\sigma}_{\vx}(\vu)+\epsilon(\nu\|\vu_x\|+\iota\\v|u_x\|^2+\Delta) \nonumber
	\end{equation}
	where $
	\nu := \frac{ \sigma_{max}}{\sqrt{N}\lambda}(\sigma_n + \frac{2\beta\kappa}{\sqrt{N}}+2\beta\epsilon), \iota=\frac{\beta\epsilon\sigma_{max}^2}{N\lambda},\\
	\Delta=\beta\Delta_\sigma+(\beta\kappa+\sqrt{N}\sigma_n)\Delta_\mu,
	\Delta_\mu=\frac{1}{\lambda\sqrt{N}}[1+\frac{\kappa\sigma_{max}}{N\sqrt{N}\lambda}+\frac{\kappa}{\sqrt{N}\lambda}],
	\Delta_\sigma=1+\epsilon+\frac{\kappa}{\sqrt{N}\lambda}.
	$
\end{proposition}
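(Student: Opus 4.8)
The plan is to introduce the \emph{exact}-kernel posterior mean $\mu_x(\vu)$ and variance $\sigma_x(\vu)^2$ from~\eqref{train_kernel} and to split, via the triangle inequality,
\[
|\dot C(\vx,\vu)-\hat\mu_x(\vu)|\le\underbrace{|\dot C(\vx,\vu)-\mu_x(\vu)|}_{\text{(I)}}+\underbrace{|\mu_x(\vu)-\hat\mu_x(\vu)|}_{\text{(II)}}.
\]
For (I), the hypothesis that each entry of $\vvarphi_C$ (see~\eqref{Cdot-expand}) lies in $\mathcal H_{k_i}$ with bounded norm places $\dot C$ in the compound RKHS of $k_c$ or $k_d$ with a norm controlled by the individual norms, so Theorem~\ref{thrm:GPtheorem} gives $\text{(I)}\le\beta\,\sigma_x(\vu)$ with probability at least $1-\delta_1$, where $\beta$ collects the norm bound and the information-gain term. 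Since the target bound is phrased in terms of $\hat\sigma_x(\vu)$, I then pass from $\sigma_x(\vu)$ to $\hat\sigma_x(\vu)$, which is responsible for the $\beta\Delta_\sigma$ part of $\Delta$. For (II), I rewrite both means in kernel-trick form, $\mu_x(\vu)=\vz^\top\vM\vk_s$ and $\hat\mu_x(\vu)=\vz^\top\hat\vM\hat\vk_s$ with $\vM:=(\vK+\lambda_N\vI_N)^{-1}$, $\hat\vM:=(\hat\vK+\lambda_N\vI_N)^{-1}$ (here $\vK,\vk_s$ and $\hat\vK,\hat\vk_s$ are the Gram matrices and query vectors of the exact and approximate kernels, and $\lambda_N=N\lambda$), and use the resolvent identity $\vM-\hat\vM=\vM(\hat\vK-\vK)\hat\vM$ to obtain
\[
\mu_x(\vu)-\hat\mu_x(\vu)=\vz^\top\vM(\vk_s-\hat\vk_s)+\vz^\top\vM(\hat\vK-\vK)\hat\vM\,\hat\vk_s.
\]

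The common workhorse is the kernel-approximation bound of Proposition~\ref{prop:kernelapprox}, whose proof I carry out here as the first two steps: first the diagonal (shift-invariant) terms shared by the ADP and AD kernels, and then --- for the AD kernel --- the dense cross-terms, where one must show that $\vpsi_i(\vx)^\top\vpsi_j(\vx')$ concentrates uniformly around $k_i(\vx)k_j(\vx')$ by a covering argument in the spirit of~\citet{analysis}; this (together with the diagonal estimate) is the source of the event of probability $\delta_2$. Aggregating the entrywise bound over the $N$ training points yields $\|\vk_s-\hat\vk_s\|\le\epsilon(\sigma_{max}\|\vu_x\|+\sqrt N)$ and $\|\vK-\hat\vK\|_{\mathrm{op}}\le\epsilon(\sigma_{max}^2+N)$, the dependence on $\vU$ entering through $\|\vU\vu_x\|\le\sigma_{max}\|\vu_x\|$ and $\|\vU\vU^\top\|_{\mathrm{op}}=\sigma_{max}^2$. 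Substituting these into the two terms of (II) and into $|\sigma_x(\vu)^2-\hat\sigma_x(\vu)^2|$, and using the deterministic facts $\|\vM\|_{\mathrm{op}},\|\hat\vM\|_{\mathrm{op}}\le 1/\lambda_N=1/(N\lambda)$ (since $\vK,\hat\vK$ are positive semidefinite), $\|\vk_s\|\le\sqrt N\kappa$, and $\|\vz\|$ controlled in terms of $\sigma_n$ and the training targets, I expand each cross-term and sort the outcome by its power of $\|\vu_x\|$: the $\|\vu_x\|$-linear pieces combine into $\epsilon\,\nu\,\|\vu_x\|$, the piece that is second order in $\epsilon$ into $\epsilon\,\iota\,\|\vu_x\|^2$, and the $\vu_x$-free remainder into $\epsilon\Delta$ with $\Delta=\beta\Delta_\sigma+(\beta\kappa+\sqrt N\sigma_n)\Delta_\mu$. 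A final union bound over the confidence-interval event ($\delta_1$) and the random-feature event ($\delta_2$) gives the $1-(\delta_1+\delta_2)$ guarantee.

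I expect the main obstacle to be the bookkeeping of this last step: there are several cross-terms --- two from (II) and, after expanding the quadratic forms, several more from $\sigma_x(\vu)^2-\hat\sigma_x(\vu)^2$ --- each a product of factors that are $O(1)$, $O(\epsilon)$, or $O(\|\vu_x\|)$, and organizing them into exactly the constants $\nu,\iota,\Delta_\mu,\Delta_\sigma$ while keeping the $O(\epsilon)$ and $O(\epsilon^2)$ contributions separate (the $\|\vu_x\|^2$ term is genuinely quadratic in $\epsilon$, which is why it is isolated) is the delicate part. A secondary subtlety is the passage from $|\sigma_x(\vu)^2-\hat\sigma_x(\vu)^2|$ to $|\sigma_x(\vu)-\hat\sigma_x(\vu)|$, which must be handled carefully because $t\mapsto\sqrt t$ is non-Lipschitz near $0$; here one exploits that for the normalized kernels the prior variances $k(\vs,\vs)$ and $\hat k(\vs,\vs)$ agree (exactly for ADP), so that only the difference of the two quadratic forms --- already controlled by the perturbation bounds above --- remains to be estimated.
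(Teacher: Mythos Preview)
Your plan coincides with the paper's proof step for step: the same triangle-inequality split $|\dot C-\mu_x|+|\mu_x-\hat\mu_x|$, the same use of Theorem~\ref{thrm:GPtheorem} for the first piece followed by passing $\sigma_x\to\hat\sigma_x$, the same resolvent/add--subtract decomposition for the mean perturbation, the same aggregation of the pointwise bound $|k-\hat k|\le\epsilon(\vu^\top\vu'+1)$ into $\|\vk_s-\hat\vk_s\|\le\epsilon(\sigma_{max}\|\vu_x\|+\sqrt N)$ and $\|\vK-\hat\vK\|\le\epsilon(\sigma_{max}^2+N)$, and the same final sorting by powers of $\|\vu_x\|$. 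Even your parenthetical remark that the first two steps are the proof of Proposition~\ref{prop:kernelapprox} matches how the paper organizes things.

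The one place you depart from the paper is the $|\sigma_x-\hat\sigma_x|$ step. The paper does not argue via ``prior variances agree''; it simply imposes the extra assumption $\sigma_x(\vu)+\hat\sigma_x(\vu)\le 1$ and writes $|\sigma_x-\hat\sigma_x|\le|\sigma_x^2-\hat\sigma_x^2|$ before expanding the quadratic forms. Your observation that $k(\vs,\vs)=\hat k(\vs,\vs)$ for ADP only kills the leading term of the variance difference; it does not by itself tame the non-Lipschitzness of $\sqrt{\cdot}$ near zero, since both posterior standard deviations can still be small simultaneously. So for that sub-step you should either adopt the paper's boundedness assumption or supply a genuine lower bound on $\sigma_x+\hat\sigma_x$; the structural remark alone is not enough.
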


\begin{proof}
	we split the proof to several steps:
	\begin{enumerate}
		
		\item Approximating AD kernel pointwise:
		\begin{align}
			|k_i(\vx)k_i(\vx')-\hat{k}_i(\vx)\hat{k}_i(\vx')|
			&= |k_i(\vx)(k_i(\vx')-\hat{k}_i(\vx'))+(k_i(\vx)-\hat{k}_i(\vx))\hat{k}_i(\vx)|, \nonumber\\
			&\leq \epsilon \: \textit{max}(|k_i(\vx)|,|\hat{k}_i(\vx')|), \nonumber\\
			&\leq \epsilon, \label{ad-pointwise}
		\end{align}
		where we used the assumption that $|k_i(\vx)|\leq 1$ for all $\vx \in \mathcal{X}$ and all $i \in [m+1]$. Let $\vE:=\vA+\vD=[e_{ij}]_{\{i,j\}}$, that is, $e_{ij}$ is the element in $i^{th}$ row and $j^{th}$ column of $\vE$, where $\vA,\vD$ are as in Definition~\ref{def:AD_kernel}. Using~\eqref{ad-pointwise}, we conclude that $e_{ij}-\hat{e}_{ij}\leq\epsilon$ for all $1\leq i,j \leq {m+1}$. This further implies,
		\begin{align*}
			|k_d((\vx,\vu),(\vx',\vu'))-\hat{k}_d((\vx,\vu),(\vx',\vu'))|&=|\sum_{1\leq i,j \leq m+1}y_i(e_{ij}-\hat{e}_{ij})y_j'|\\
			&\leq \epsilon(\vu^\top \vu'+1), 
		\end{align*}
		where $k_d((\vx,\vu),(\vx',\vu'))$ is in Definition~\ref{def:AD_kernel} and $y_i, y_j'$ denote the $i^{th}$ and $j^{th}$ elements of $\vy := [\vu^\top~~1]^\top$ and $\vy':= [\vu'^\top~~1]^\top$, respectively.
		\item Approximating ADP Kernel pointwise:
		\begin{align*}
			|k_c((\vx,\vu),(\vx',\vu'))-\hat{k}_c((\vx,\vu),(\vx',\vu'))|&=|\sum_{1\leq i \leq m+1}y_i(e_{ii}-\hat{e}_{ii})y_i'|\\
			&\leq \epsilon(\vu^\top \vu'+1), 
		\end{align*}
		
		\item Approximating ADP, AD kernel matrices:\\
		since we have the same bound on the estimation error of ADP kernel and AD kernel, the following proofs hold for both kernels:
		\begin{align*}
			\|\vK-\hat{\vK}\|_2 &\leq \epsilon \|[\vu_i^\top \vu_j+1]_{i,j}\|_2 \leq \epsilon \sigma_{max}^2 + \epsilon (m+1)\\
			\|\vk_s-\hat{\vk}_s\| &\leq \epsilon \|[\vu_x^\top  \vu_i+1]_i\|
			\leq \epsilon \|\vu_x\|.\|[\vu_1,...,\vu_N]\| +\epsilon\sqrt{N} \leq \epsilon \sigma_{max}\|\vu_x\|+ \epsilon\sqrt{N}
		\end{align*}

		\item Approximating the mean:
		\begin{align*}
			|\mu_x(\vu)-\hat{\mu}_x(\vu)| &=
			\|\vz^\top\|\|(\hat{\vK}+\lambda_n\vI)^{-1}(\vk_s-\hat{\vk}_s)+((\vK+\lambda_n\vI)^{-1}-(\hat{\vK}+\lambda_n\vI)^{-1})\vk_s\|\\
			&\leq \frac{\|\vz\|}{\lambda_n}\|\hat{\vk}_s-\vk_s\|+\frac{\|\vz\|. \|\hat{\vK}-\vK\|}{\lambda_n^2}\|\vk_s\|\\
			& \leq \frac{\sqrt{N}\sigma_n}{N\lambda}\|\hat{\vk}_s-\vk_s\|+\frac{\sqrt{N}\sigma_n\kappa}{n^2\lambda^2}\|\hat{\vK}-\vK\|\\
			&\leq \frac{\sqrt{N}\sigma_n}{N\lambda}(\epsilon \sigma_{max}\|\vu_x\|+ \epsilon\sqrt{N})+ \frac{\sqrt{N}\sigma_n\kappa}{N^2\lambda^2} (\epsilon \sigma_{max}^2 + \epsilon N) \quad\\
			&\leq \epsilon \|\vu_x\| \frac{\sigma_{max}\sigma_n}{\sqrt{N}\lambda} + \sqrt{N}\sigma_n\Delta_\mu
		\end{align*}
		Where we used $(\vK+\lambda \vI)^{-1}-(\hat{\vK}+\lambda \vI)^{-1}=(\hat{\vK}+\lambda \vI)^{-1}(\hat{\vK}-\vK)(\vK+\lambda \vI)^{-1}$, and that the smallest eigenvalue of $\hat{\vK}+\lambda \vI$ and $\vK+\lambda \vI$ is at least $\lambda$.
		\item Approximating variance: Assume $\sigma_x(\vu) + \hat{\sigma} _{x}(\vu) \leq 1$
		\begin{align*}
			|\sigma_x(\vu) - \hat{\sigma}_x(\vu)| &\leq |\sigma_x(\vu)^2 - \hat{\sigma}_{x}(\vu)^2| \\
			&\leq \epsilon + |\vk_s[(\vK+\lambda)^{-1}\vk^\top_s-(\hat{\vK}+\lambda)^{-1}\hat{\vk}^\top_s]+[\vk_s-\hat{\vk}_s](\hat{\vK}+\lambda)^{-1}\hat{\vk}^\top_s|\\
			&\leq \epsilon + \|\vk_s\| \frac{|\mu_x(\vu)-\hat{\mu}_{x}(\vu)|}{\|\vz\|}+\frac{\epsilon \sigma_{max}\|\vu_x\|+ \epsilon\sqrt{N}}{N\lambda}\|\hat{\vk}_s\|\\
			&\leq \epsilon + \kappa (\epsilon \|\vu_x\| \frac{\sigma_{max}^2}{N\lambda} + \Delta_\mu) \\
			&\quad +\frac{\epsilon \sigma_{max}\|\vu_x\|+ \epsilon\sqrt{N}}{N\lambda}(\kappa+\epsilon \sigma_{max}^2\|\vu_x\|+ \epsilon\sqrt{N})\\
			&\leq \|\vu_x\|\frac{2\epsilon \sigma_{max}^2}{\sqrt{N}\lambda}(\frac{\kappa}{\sqrt{N}}+\epsilon) + \|\vu_x\|^2\frac{\epsilon^2 \sigma_{max}^2}{N\lambda}+\Delta_\sigma+\kappa\Delta_\mu
		\end{align*}
		\item Bounds on total error: with a probability of $1\!-\!(\delta_1\!+\!\delta_2)$:
		\begin{align*}
			|\dot{C}_x(\vu)-\hat{\mu}_x(\vu)|&\leq |\mu_x(\vu)-\dot{C}_x(\vu)| + |\mu_x(\vu)-\hat{\mu}_x(\vu)| \\
			&\leq \beta\hat{\sigma}_x(\vu)+ \beta|\sigma_x(\vu)-\hat{\sigma}_x(\vu)| + |\mu_x(\vu)-\hat{\mu}_x(\vu)|\\
			&\leq \beta\hat{\sigma}_x(\vu)+ \|\vu_x\|\nu + \|\vu_x\|^2\iota+\Delta_s
		\end{align*}
	\end{enumerate}
\end{proof}

\section{Empirical Compound Random Basis Comparison}\label{sec:rf_comparison}

In this Appendix, we perform experiments with synthetic data to demonstrate the relationship between training time and RMSE for ADP-RF and AD-RF models. 
Specifically, we use the following control-affine function to generate the data,
\begin{align}\label{u_analysis}
	h_m(\vx,\vu) = 3 \sin(2\pi \vx^\top \vw_1) - 2 \sin(4\pi \vx^\top\vw_2)+ \sum_{j=1}^m (\gamma_j \sin(2\pi \vx^\top \vw_{j+2}))u_j + \epsilon,
\end{align}
where  $\vw_1, \vw_2 \in \R^n$ parameterize $\vf(\vx) := 3 \sin(2\pi \vx^\top \vw_1) - 2 \sin(4\pi \vx^\top\vw_2)$, and $\{\vw_{j+2}\}_{j=1}^m \in \R^n$ parameterize $\vg(\vx) := [\gamma_1 \sin(2\pi \vx^\top \vw_{3}) ~ \gamma_2 \sin(2\pi \vx^\top \vw_{4}) ~ \cdots~ \gamma_m \sin(2\pi \vx^\top \vw_{m+2})]$, such that $h_m(\vx,\vu) = \vf(\vx) + \vg(\vx)\vu + \epsilon$ is affine in $\vu$. All the (entries of) weights $\vw$ and $\gamma$ are sampled uniformly at random from $[0,1)$.

We use the function \eqref{u_analysis} to generate data for varied input dimension $m=1,...,20$. For each $m=k$, to generate $h_m(\vx,\vu)$,  we use the previous weights $\vw_1,\dots,\vw_{k+1},\gamma_1,\dots,\gamma_{k-1}$ and only generate new weights for $j=k$, that is $\vw_{k+2}$ and $\gamma_k$. 

For each input dimension, we sample $1000$ values of $\vx_i\in\mathbb R^6$ and $\vu_i\in\mathbb R^m$ uniformly at random from $[0,1)$.
For each $i$, we set the label $y_{i}=h_m(\vx_i,\vu_i)+\epsilon_i$ where
$\epsilon_i \sim \mathcal{N}(0,0.01)$.
Using this dataset, we train and evaluate AD-RF, ADP-RF on a 90/10 train/test split.
For all methods, we use $\lambda=1$ and rbf $\gamma=1$. 
To choose feature dimensions, we vary the state-dependent basis dimension starting from $22$, and is incremented by $22$ for $10$ steps; this is roughly chosen based on the widely considered best random features dimension of $\sqrt{n}\log{n}$. The ADP compound basis dimension would be the multiplication of these dimensions by $m+1$. For fair comparison, we make sure that both the compound basis dimensions match.
At each dimension, we resample the random features 10 times, and record the training time and test RMSE.
\begin{figure}[t!]
	\begin{centering}
		\includegraphics[width=0.32\linewidth]{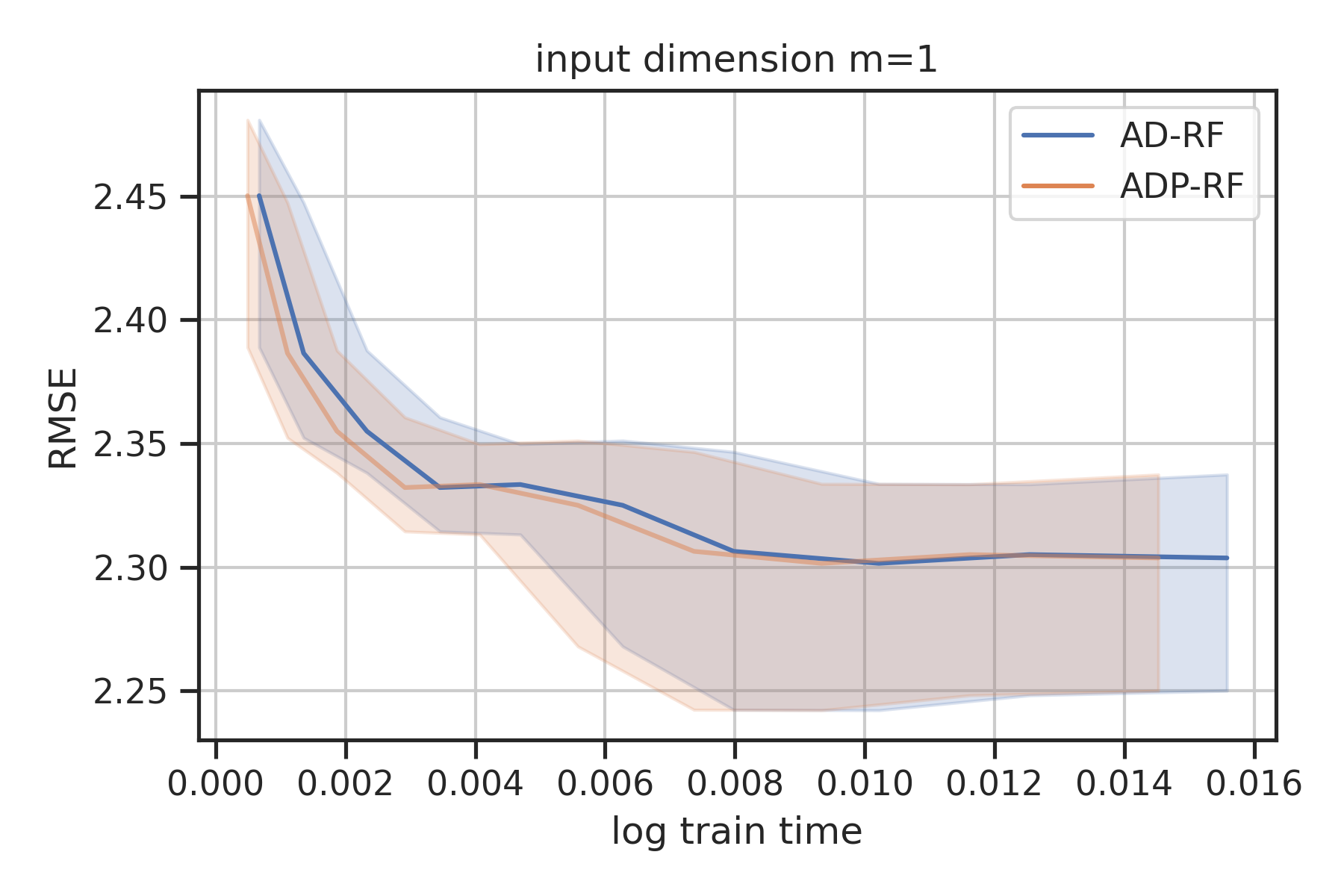} 
		~~ 
		\includegraphics[width=0.32\linewidth]{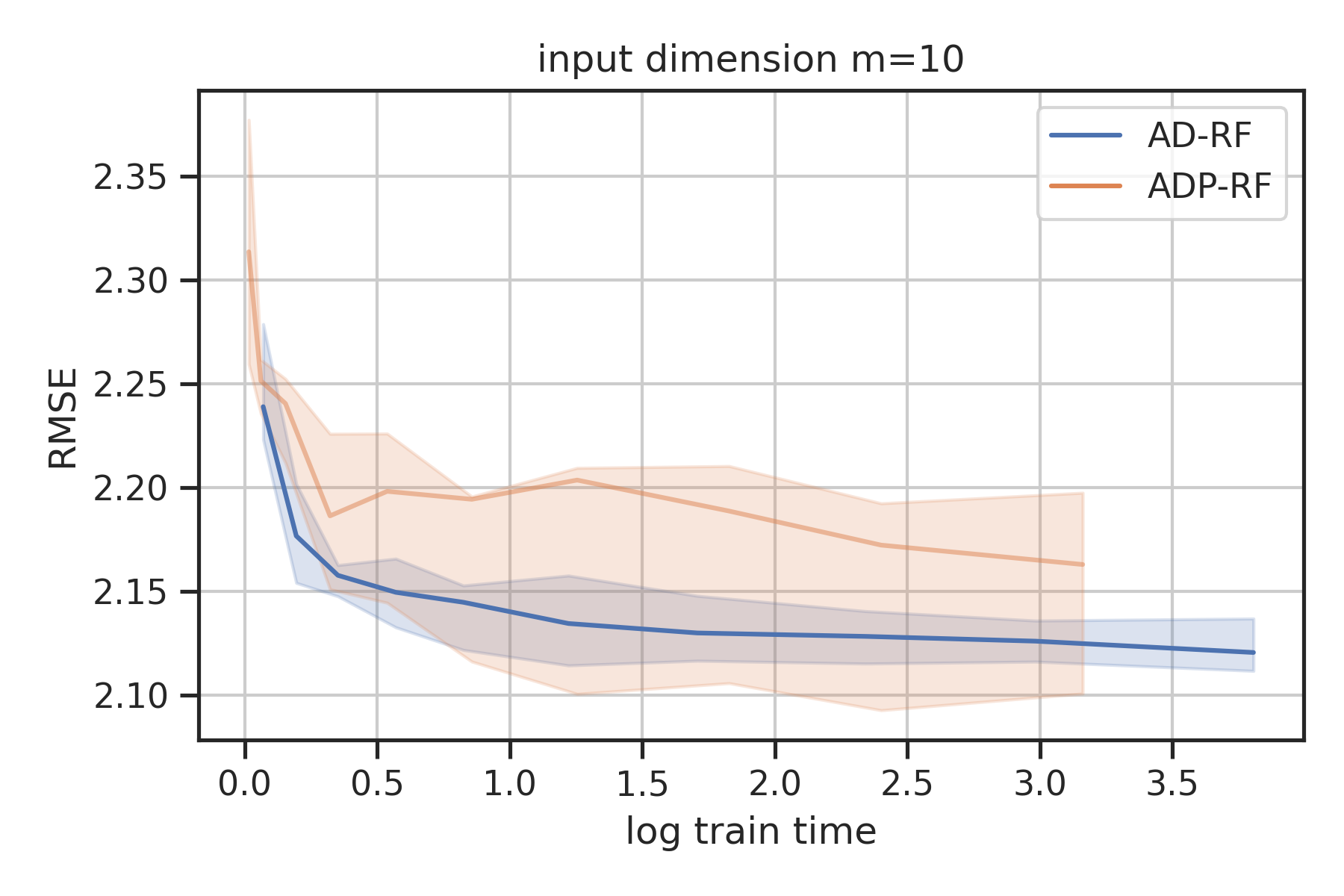}
		~~ 
		\includegraphics[width=0.32\linewidth]{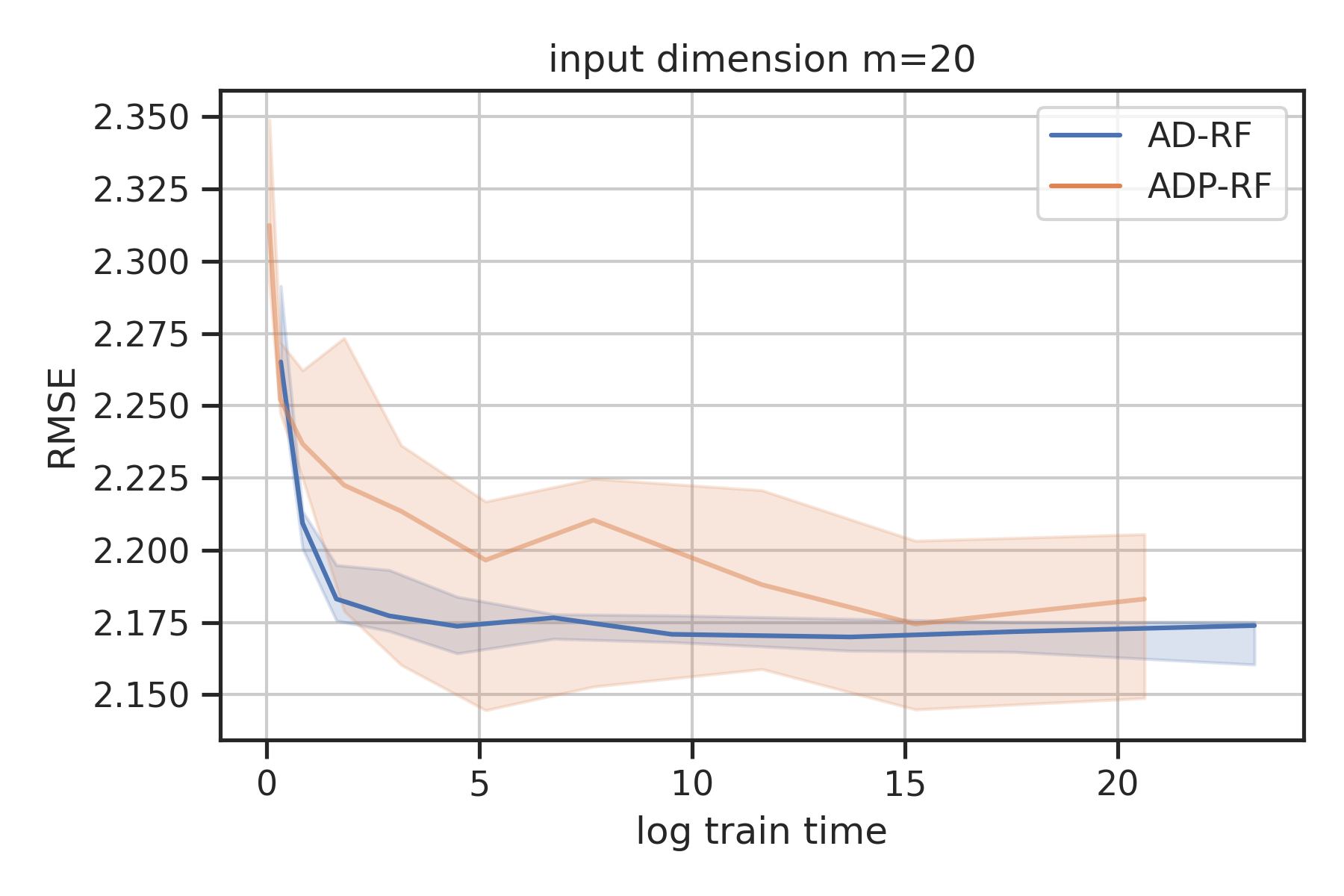}
	\end{centering}
	\caption{Evaluation of models, comparing prediction accuracy on 100 test data points against average training time on 900 data points for $m=1,10,20$ respectively. Random features are sampled 10 times at matching compound dimensions; the panels display median and quartile RMSE over the trials. Increasing $m$ demonstrates the advantage of AD-RF basis over ADP-RF in RMSE for fixed train-time.}\vspace{-20pt}
	
	\label{fig:prediction_appendix}
\end{figure} 

Figure \ref{fig:prediction_appendix} plots the median and quartile RMSE against the average training time. For lower dimensional inputs $\vu$, at a given train time, ADP-RF and AD-RF perform similarly. However with increasing $m$, the performance gap between AD-RF and ADP-RF becomes larger. Specifically at larger $m$'s, AD-RF converges faster at a lower RMSE; This means AD-RF reaches its optimal RMSE at a lower train-time, which implies that at lower feature dimensions, AD-RF captures the complexity of the model, whereas to ensure the same for ADP-RF, we need more complex and higher dimensional features.

\section{Robust CCF Control}\label{sec:robust_ccf}
In Section~\ref{sec:case_study_CFC}, we present a \emph{certainty-equivalent} approach to data-driven control using CCF control as a case study.
Here, we additionally present a \emph{robust} approach.
It allows for synthesizing a data-driven control law which robustly enforces the constraint in~\ref{eq:CCF-QP}.

\subsection{Robust Bayesian Data-driven Controller}\label{sec:BLR-SOCP}
In this section, we show how to construct a robust data driven control law given the control-affine basis functions (introduced in Section~\ref{sec:comparison}). Similar to Section~\ref{sec:case_study_CFC}, we suppose that the dynamics $\vf$ and $\vg$ are unknown, so the robust CCF controller cannot be directly implemented. We assume that a valid CCF $C$ and comparison function $\alpha$ for the unknown true system is given. Given a sampled trajectory $\{(\vx_i,\vu_i)\}_{i=1}^N$,
we construct a control affine modelling problem for $\dot C\!:\!\mathcal X\!\times\!\mathcal U\!\to\!\R$ as described in Example~\ref{example_03}. Specifically, we use GP regression to model the uncertainty  
and compute the mean $\mu_{\vx}(\vu):=\mu(\vs)$
and variance $\sigma_{\vx}(\vu):=\sigma(\vs)$  according to~\eqref{posterier}.
As first proposed in the GP context by \cite{contro1},
we make use of the $1-\delta$ confidence interval presented in Theorem~\ref{thrm:GPtheorem} 
to construct an optimization-based controller. 
The data-driven min-norm stabilizing feedback control law $\vu^* \colon \mathcal X \to \mathcal U$ is defined as
\begin{align*}\tag{\textbf{BLR-CCF-SOCP}} \label{BLR-CCF-SOCP}
	\vu^{*}(\vx) &= \underset{\vu\in \mathcal U}{\arg\min } \norm{\vu}_2^2\\
	&~\text{s.t.}\quad \mu_{\vx}(\vu) + \beta \sigma_{\vx}(\vu) + \alpha(C(\vx)) \leq 0
\end{align*}

This optimization problem will be a Second-Order Cone Program (SOCP) as long as the constraint is a conic in $\vu$.
This follows from the form of the mean and variance function,
and can be guaranteed as long as the basis function
$\vphi$ is \emph{affine} in $\vu$.
This is a natural requirement due to the affine structure of the dynamics.
\begin{equation}
	\begin{aligned}\label{Cdot-expand}
		\dot C(\vx, \vu)& = \nabla C(\vx)^\top \vf(\vx) + (\nabla C(\vx)^\top \vg(\vx))\vu, \\
		&= \vvarphi_C \begin{bmatrix}\vu \\ 1 \end{bmatrix}
	\end{aligned}
\end{equation}
where $\vvarphi_C \in \R^{1\times(m+1)}$.  

When a random features approximation is used, there is an additional error term that much be accounted for in proper robust control.
Leveraging the error analysis presented in Section~\ref{sec:errorbounds}, we present~\eqref{RF-CCF-SOCP} which is both computationally efficient and robust. 

\begin{align*}
	\tag{\textbf{RF-CCF-SOCP}} \label{RF-CCF-SOCP}
	\vu^*(\vx) &= \underset{\vu\in \mathbb{R}^m}{\arg\min } \|\vu\|_2^2 \\
	&\text{s.t.} \quad  \hat \mu_x(\vu) + \beta \hat \sigma_x(\vu) +\epsilon(\nu\|\vu_x\|+\iota\|\vu_x\|^2+\Delta) + \alpha(C(\vx))\leq 0
\end{align*}


\section{Experimental Details}
\subsection{Double pendulum dynamics derivation}\label{sec:dipderivation}
We consider a frictionless two-link pendulum with torque $\tau_1$ applied at a fixed base, where the first link is attached, and torque $\tau_2$ applied at the opposite end, where the second link is attached.  
The links are modeled as point masses $m_1$ and $m_2$ at lengths $l_1$ and $l_2$ from the joints. We define $\theta_1$ as the angle of the first link, measured from the upright positive, and $\theta_2$ as the angle of the second link, measured from the first link. The corresponding angular rates are $\dot{\theta}_1$ and $\dot{\theta}_2$. 

Letting $\vq = [\theta_1, \theta_2 ]$, 
the total kinetic energy of the system is 
given by
\begin{align}
	T(\vq, \dot{\vq}) &= \frac{1}{2}((m_1 + m_2)l_1^2 + 2 m_2 l_1 l_2 \cos{\theta_2} + m_2 l_2^2) \dot{\theta}_1^2+ (m_2 l_1 l_2 \cos{\theta_2} + m_2 l_2^2) \dot{\theta}_1 \dot{\theta}_2 \nonumber \\
	& \qquad+ \frac{1}{2}m_2 l_2^2 \dot{\theta}_2^2,
\end{align}
and the potential energy of the system is given by
\begin{align}
	U(\vq) =
	(m_1 + m_2) g l_1 \cos{\theta_1} + m_2 g l_2 \cos{(\theta_1 + \theta_2)}.
\end{align}
As a result, the Lagrangian of the system takes the following form,
\begin{align}
	L(\vq, \dot{\vq}) &= T(\vq, \dot{\vq}) -  U(\vq) ,\nonumber \\
	& = \frac{1}{2}((m_1 + m_2)l_1^2 + 2 m_2 l_1 l_2 \cos{\theta_2} + m_2 l_2^2) \dot{\theta}_1^2 + (m_2 l_1 l_2 \cos{\theta_2} + m_2 l_2^2) \dot{\theta}_1 \dot{\theta}_2, \nonumber \\
	& \qquad + \frac{1}{2}m_2 l_2^2 \dot{\theta}_2^2 - (m_1 + m_2) g l_1 \cos{\theta_1} - m_2 g l_2 \cos{(\theta_1 + \theta_2)}.
\end{align}
Now we can write the Lagrange equations as follows
\begin{align}
	\frac{d}{dt}\derp{L(\vq, \dot{\vq})}{\dot{\vq}} - \derp{L(\vq, \dot{\vq})}{\vq} = \boldsymbol{\tau}.
\end{align}
The partial derivatives are given by
\begin{align}
	\derp{L(\vq, \dot{\vq})}{\dot{\theta}_1} &= ((m_1 + m_2)l_1^2 + 2 m_2 l_1 l_2 \cos{\theta_2} + m_2 l_2^2) \dot{\theta}_1 + (m_2 l_1 l_2 \cos{\theta_2} + m_2 l_2^2) \dot{\theta}_2, \nonumber \\
	\derp{L(\vq, \dot{\vq})}{{\theta}_1} &= (m_1 + m_2) g l_1 \sin{\theta_1} + m_2 g l_2 \sin{(\theta_1 + \theta_2)}, \nonumber \\
	\derp{L(\vq, \dot{\vq})}{\dot{\theta}_2} &= (m_2 l_1 l_2 \cos{\theta_2} + m_2 l_2^2) \dot{\theta}_1 + m_2 l_2^2 \dot{\theta}_2, \nonumber \\
	\derp{L(\vq, \dot{\vq})}{{\theta}_2} &= -m_2 l_1 l_2 \sin{\theta_2} \dot{\theta}_1^2 - m_2 l_1 l_2 \sin{\theta_2}\dot{\theta}_1\dot{\theta}_2 + m_2 g l_2 \sin(\theta_1 + \theta_2). \nonumber
\end{align}
This further implies,
\begin{align}
	\frac{d}{dt}\derp{L(\vq, \dot{\vq})}{\dot{\theta}_1} &= ((m_1 + m_2)l_1^2 + 2 m_2 l_1 l_2 \cos{\theta_2} + m_2 l_2^2) \ddot{\theta}_1 + (m_2 l_1 l_2 \cos{\theta_2} + m_2 l_2^2) \ddot{\theta}_2 \nonumber \\
	& \qquad - 2 m_2 l_1 l_2 \sin{\theta_2} \dot{\theta}_1 \dot{\theta}_2 - m_2 l_1 l_2 \sin{\theta_2} \dot{\theta}_2^2, \nonumber \\
	\frac{d}{dt}\derp{L(\vq, \dot{\vq})}{\dot{\theta}_2} &= (m_2 l_1 l_2 \cos{\theta_2} + m_2 l_2^2) \ddot{\theta}_1 +  m_2 l_2^2 \ddot{\theta}_2 - m_2l_1l_2\sin{\theta_2}\dot{\theta}_1\dot{\theta}_2. \nonumber
\end{align}
With these results, we can write the following Lagrange equations
\begin{align}
	&\frac{d}{dt}\derp{L(\vq, \dot{\vq})}{\dot{\theta}_1} - \derp{L(\vq, \dot{\vq})}{{\theta}_1} = \tau_1, \nonumber \\
	\implies & ((m_1 + m_2)l_1^2 + 2 m_2 l_1 l_2 \cos{\theta_2} + m_2 l_2^2) \ddot{\theta}_1 + (m_2 l_1 l_2 \cos{\theta_2} + m_2 l_2^2) \ddot{\theta}_2 \nonumber \\
	& - 2 m_2 l_1 l_2 \sin{\theta_2} \dot{\theta}_1 \dot{\theta}_2 - m_2 l_1 l_2 \sin{\theta_2} \dot{\theta}_2^2 - (m_1 + m_2) g l_1 \sin{\theta_1} - m_2 g l_2 \sin{(\theta_1 + \theta_2)} = \tau_1. \nonumber
\end{align}
Similarly,
\begin{align}
	&\frac{d}{dt}\derp{L(\vq, \dot{\vq})}{\dot{\theta}_2} - \derp{L(\vq, \dot{\vq})}{{\theta}_2} = \tau_2, \nonumber \\
	\implies & (m_2 l_1 l_2 \cos{\theta_2} + m_2 l_2^2) \ddot{\theta}_1 +  m_2 l_2^2 \ddot{\theta}_2 - m_2l_1l_2\sin{\theta_2}\dot{\theta}_1\dot{\theta}_2 +m_2 l_1 l_2 \sin{\theta_2} \dot{\theta}_1^2 \nonumber \\
	&+ m_2 l_1 l_2 \sin{\theta_2}\dot{\theta}_1\dot{\theta}_2- m_2 g l_2 \sin(\theta_1 + \theta_2) = \tau_2, \nonumber \\
	\implies & (m_2 l_1 l_2 \cos{\theta_2} + m_2 l_2^2) \ddot{\theta}_1 +  m_2 l_2^2 \ddot{\theta}_2 +m_2 l_1 l_2 \sin{\theta_2} \dot{\theta}_1^2 - m_2 g l_2 \sin(\theta_1 + \theta_2) = \tau_2. \nonumber
\end{align}
From the above Lagrange equations, we write the following manipulator equation,
\begin{align}
	\vM(\vq) \ddot{\vq} + \vC(\vq, \dot{\vq}) \dot{\vq} = \boldsymbol{\tau}_g(\vq) + \vB \vu.
\end{align}
where
\begin{align}
	\vM(\vq) &:= \begin{bmatrix} (m_1 + m_2)l_1^2 + 2 m_2 l_1 l_2 \cos{\theta_2} + m_2 l_2^2 & m_2 l_1 l_2 \cos{\theta_2} + m_2 l_2^2 \\ m_2 l_1 l_2 \cos{\theta_2} + m_2 l_2^2 & m_2 l_2^2 \end{bmatrix}, \\
	\vC(\vq, \dot{\vq}) &:= \begin{bmatrix} - 2 m_2 l_1 l_2 \sin{\theta_2} \dot{\theta}_2 & - m_2 l_1 l_2 \sin{\theta_2} \dot{\theta}_2 \\ m_2 l_1 l_2 \sin{\theta_2} \dot{\theta}_1 & 0 \end{bmatrix}, \\
	\boldsymbol{\tau}_g(\vq) &:= \begin{bmatrix} (m_1 + m_2) g l_1 \sin{\theta_1} + m_2 g l_2 \sin{(\theta_1 + \theta_2)} \\  m_2 g l_2 \sin(\theta_1 + \theta_2) \end{bmatrix}, \\
	\vB &:= \begin{bmatrix} 1 \\ 1 \end{bmatrix}, \quad \vu := \begin{bmatrix} \tau_1 \\ \tau_2 \end{bmatrix}, \quad \vq := \begin{bmatrix} \theta_1 \\ \theta_2 \end{bmatrix}, \quad \dot{\vq} = \begin{bmatrix} \dot{\theta}_1 \\ \dot{\theta}_2\end{bmatrix},  \quad \ddot{\vq} = \begin{bmatrix} \ddot{\theta}_1 \\ \ddot{\theta}_2\end{bmatrix}.
\end{align}
Therefore, the state of the acrobat is $\vx=(\vq, \dot{\vq})=(\theta_1,\theta_2, \dot{\theta_1}, \dot{\theta_2})$, where, $\theta_1,\theta_2 \in [0,\pi]$ and $\dot{\theta_1}, \dot{\theta_2} \in \R$. The input $\vu=[\tau_1, \tau_2]$ is of 2 dimensions.
The manipulator equation can be written as the following control affine dynamics.
\begin{align}
	\dot{\vx}= \underbrace{\begin{bmatrix} \dot{\vq} \\ \vM(\vq)^{-1} \left(-\vC(\vq, \dot{\vq})\dot{\vq} + \boldsymbol{\tau}_g(\vq)\right) \end{bmatrix}}_{\vf(\vx)} + \underbrace{\begin{bmatrix} 0\\ \vM(\vq) ^{-1}\vB \end{bmatrix}}_{\vg(\vx)}\vu
\end{align}

\subsection{CCF Modelling Problem}\label{sec:ccf_modeling}
We consider a control affine modelling problem in the setting of learning the residual errors from a nominal dynamics model (Example~\ref{example_04})
for a CCF (Example~\ref{example_03}).
The true dynamics model $\vf,\vg$ is defined according to the equations above with $m_1=m_2=l_1=l_2=1$ while
the nominal dynamics model $\tilde\vf,\tilde\vg$ is defined with incorrect values of mass and length, $\tilde m_1=\tilde m_2=\tilde l_1=\tilde l_2=0.6$.
The CCF is a CLF and is defined to ensure stability to the origin:
\[C(\vx)=\vx^\top P \vx,\quad P=\begin{bmatrix}
	12 & 0& 3.16&0\\
	0&12&0&3.16\\
	3.16&0&4.04&0\\
	0&3.16&0&4.04
\end{bmatrix}.\]
Using the nominal model,
$\dot{\tilde C}(\vx,\vu) = \nabla C(\vx)^\top (\tilde \vf(\vx) + \tilde \vg(\vx)\vu)$.
The goal of the modelling problem is to learn the residual $ \dot C(\vx,\vu)- \dot{\tilde C}(\vx,\vu)$.
Given a sampled trajectory $\{\vx_i,\vu_i\}_{i=1}^{L+1}$, we compute $\{C(\vx_i)\}_{i=1}^{L+1}$ and use forward finite differencing to approximate $\{\hat{\dot C}_i\}_{i=1}^L$.
Then the regression targets are defined as $z_i = \hat{\dot C}_i - \dot{\tilde C}(\vx_i,\vu_i)$.

\subsection{Nominal Control Data Collection}
\label{sec:data_collection}
We collect data using a controller designed with the nominal dynamics models.
The control law is given by~\ref{eq:CCF-QP} with $c_1=25$, $\vu_d(\vx)$ a feedback linearizing controller designed for nominal dynamics $\tilde \vf,\tilde\vg$, $C(\vx)=\vx^\top P\vx$ defined above, $\tilde\vf,\tilde \vg$ used in place of $\vf,\vg$, and $\alpha(c)=0.725c$.
Additionally, the hard constraint is replaced with a slack variable with penalty coefficient $1e6$.

The nominal control law is simulated in closed-loop with the true dynamics using Runge-Kutta 4(5) at 10 Hz.
We collect $E=226$ trajectories starting from different initial conditions.
The initial conditions comprise of a meshgrid of coordinates of the different initial states.
Each trajectory is $5$ seconds long, resulting in $L+1=50$ sampled points, the final datasize is of size $11074$.

For the prediction experiments, the data is split into a test and train set shuffled at random.

\subsection{Closed-Loop Experiments}\label{sec:closed_loop}

We evaluate six controllers starting from initial state $\vx_0=[2,0,0,0]$.
All simulations during data collection and  evaluation use Runge-Kutta 4(5) at 10 Hz.

The nominal controller is described above. 
The oracle controller is given by~\ref{eq:CCF-QP} with the same parameters as the nominal, except that the constraint uses the true dynamics $\vf,\vg$.
Each of the four data-driven controllers
is defined using an affine model of the residual $\hat h$.
The control law is defined by~\ref{eq:CCF-QP} where $\dot C(\vx,\vu)$ is replaced with  $\dot{\tilde C}(\vx,\vu)+\hat h(\vx,\vu)$,
the slack penalty is $1e6\cdot (t+1)$ where $t$ is the time in seconds,
and otherwise the parameters are the same as for the nominal/oracle controllers.
Section~\ref{affinenessinu} presents the precise affine form in terms of the training data.

Each data-driven model $\hat h$ is trained in an episodic manner.
The process is warm started with subsampling the nominal grid data at a rate of 1/5 resulting in 2215 data points.
This initial training dataset defines an affine model (AD-K, ADP-K, AD-RF, or ADP-RF) which in turn defines a data-driven controller.
We simulate the data-driven controller in closed loop starting  from $\vx_0$ for 10 seconds, add the resulting data to the training set, and retrain.
We repeat for 10 episodes, resulting in a training set size of 3215, and report the performance of the final controller.

\end{document}